 \tikzset{every node/.style={auto}}
 \tikzset{every state/.style={rectangle, minimum size=0pt, draw=none, font=\normalsize}}
  \tikzset{bend angle=7}
  \newcommand{\R}{\mathbb{R}}
  \newcommand{\Z}{\mathbb{Z}}
  \def\wt{\widetilde}
\def\RR{\mathcal{R}}
  \def\WW{\mathcal{W}}
    \def\PP{\mathcal{P}}
      \def\BB{\mathcal{B}}
  \def\ds{\displaystyle}
  \def\mfX{\mathfrak{X}}
  \def\II{\mathcal{I}}
  \def\etaa{h}
  \def\com{Y}
\newcommand{\specialcell}[2][c]{\begin{tabular}[#1]{@{}c@{}}#2\end{tabular}}
\newcommand\y{\boldsymbol{y}}
\def\S{\mathcal{S}}
\def\Re{\mathcal{R}}
\def\C{\mathcal{C}}
\def\Z{\mathbb{Z}}
\def\G{\mathcal{G}}
\theoremstyle{plain}
\newtheorem{theorem}{Theorem}[section]
\newtheorem{lemma}[theorem]{Lemma}
\newtheorem{prop}[theorem]{Proposition}
\theoremstyle{definition}
\newtheorem{definition}[theorem]{Definition}
\newtheorem{example}[theorem]{Example}
\def\cre{\color{red}}
\title{On reaction network implementations of  neural networks}
\author[1]{David F. Anderson}
\author[2]{Badal Joshi}
\author[3]{Abhishek Deshpande}
\affil[1]{Department of Mathematics, University of Wisconsin-Madison, {\tt anderson@math.wisc.edu}.}
\affil[2]{Department of Mathematics, California State University San Marcos, {\tt bjoshi@csusm.edu}.}
\affil[3]{Department of Mathematics, University of Wisconsin-Madison, {\tt deshpande8@wisc.edu}.}
\begin{document}

\maketitle

\abstract{
This paper is concerned with the utilization of deterministically modeled chemical reaction networks for the implementation of (feed-forward) neural networks.  We  develop a general mathematical framework  and prove that the ordinary differential equations (ODEs) associated with certain reaction network implementations of neural networks have desirable properties including (i) existence of unique positive fixed points that are smooth in the parameters of the model (necessary for gradient descent), and (ii) fast convergence to the fixed point regardless of initial condition (necessary for efficient implementation).  We do so by first making a connection between neural networks and fixed points for systems of ODEs, and then by constructing reaction networks with the correct associated set of ODEs.  We demonstrate the theory by constructing a reaction network that implements a neural network with a smoothed ReLU activation function, though we also demonstrate how to generalize the construction to allow for other activation functions (each with the desirable properties listed previously).  As there are multiple types of ``networks'' utilized in this paper, we also give a careful introduction to both reaction networks and neural networks, in order to disambiguate  the overlapping vocabulary in the two settings and to clearly highlight the role of each network's properties.
}

\section{Introduction}

 There is a growing interest in synthetic chemical reaction networks that carry out some pre-determined task \cite{cappelletti2020stochastic, QSW2011, SSW2010, buisman2009computing,napp2013message,gopalkrishnan2016scheme,virinchi2017stochastic,virinchi2018reaction,poole2017chemical,singh2019reaction, p1ccrnJournal, LeaderElectionDIST, condon2020approximate}. The field that develops and analyzes these networks often goes by the name ``Computation with chemical reaction networks (CRNs).''  The tasks being carried out can range from the pedestrian, such as determining the minimum or sum of two numbers, to the complex. The goal of this style of work  is not to devise  methods that can match or exceed silicon based computers in terms of speed, but instead it is to develop methods of computation for environments in which silicon based computers cannot  currently go--for instance in the cellular environment. A particular type of (complex) computation now found ubiquitously in our daily technology is machine learning via neural networks, and so it is no surprise that there has been  recent work  on the development of chemical reaction network implementations of neural networks with a fixed set of parameters~\cite{kim2004neural, hopfield1984neurons, hjelmfelt1991chemical, poole2017chemical, MSMW2019, DavidS2020, blount2017feedforward, chiang2015reconfigurable}.  More generally, work focused in this context on understanding the connection between biochemical models and the physical mechanisms of information processing stretches back at least through the 1960s \cite{rossler1974synthetic, rossler1974chemical, vohradsky2001neural, mjolsness1991connectionist, mestl1996chaos, benenson2012biomolecular, bray1990intracellular, sugita1963functional, buchler2003schemes}.

 The papers we are aware of in the literature pertaining to   chemical reaction network implementations of neural networks focus on  particular constructions.  Hence, there is currently little mathematical theory developed that can be utilized in a general manner.  (An exception is \cite{poole2017chemical}, which develops the necessary theory for chemical Boltzmann Machines to be implemented via stochastic models of chemical reaction networks.)    Moreover, it is often simulation that is put forth as evidence to demonstrate the validity of a construction as opposed to rigorous proof.  Thus, these works are not mathematical in nature.  (This should not be taken as a criticism, as these papers were not \textit{meant} to focus on the mathematics.)  The major goal of this work, therefore, is to begin the  development of a mathematical framework for the construction of deterministically modeled reaction networks that implement neural networks and machine learning.  In particular,  the mathematical framework will allow us to prove that the  dynamical system associated to the constructed chemical reaction network will (i) implement a given neural network, and (ii) have certain desirable properties, briefly outlined below.

Some further details are called for before proceeding. In order to devise deterministically modeled chemical reaction networks that implement neural networks, the following broad strategy may be employed:
 \begin{enumerate}
     \item Fix a neural network with some choice of activation function, $\varphi$, and parameters (biases and weights), $\PP$.  Denote the output values of the neural network via $\Psi(d),$ where $d$ is an input (data).
     \item Determine a chemical reaction network $\{\S,\C,\Re\}$ for which the associated mass-action ODE system
     \begin{equation}\label{eq:786586554}
        \dot x(t) = f(x(t)),\quad x(0) = d,
     \end{equation}
     satisfies $F(x) = \Psi(d),$ where $F$ is some functional of the solution, $x$, to \eqref{eq:786586554} (note that the solution $x$, depends on $d$, the initial value).  In particular, it is natural to take the output to be the limiting steady state values of some ordered subset of the species,
     \[
        F(x) = \left(\lim_{t\to \infty} x_i(t)\right)_{i \in \mathcal I},
     \]
     where $\mathcal I$ is some index set.
 \end{enumerate}
 The above is the basic strategy of~\cite{blount2017feedforward}, in which they design a reaction network to learn the XOR function, and of~\cite{MSMW2019}. 
We note that a different modeling framework is used in \cite{DavidS2020},
 in which limiting values are found when certain counts go to zero (and remain there).  
 
 The basic strategy outlined above, i.e.~utilizing the limiting values of an initial value problem \eqref{eq:786586554} to represent the output of a neural network, is quite natural, but it leaves open a number of  questions that need to be addressed for a given construction:
 \begin{enumerate}
    \item When will the constructed reaction network admit limiting steady-states?
     \item Assuming limiting steady-state values exist, under what conditions will they be unique for a given choice of model parameters and for a given initial condition?    
     \item Assuming there are unique limiting steady states, when will they be smooth in the parameters (which is important for gradient descent and other optimization procedures)?
     \item How long will it take the model to converge?  In particular, could the time required to determine the output of the system depend strongly on the initial conditions?  
 \end{enumerate}
 We note that these are highly non-trivial questions in the present context as mass-action models of chemical systems are polynomial dynamical systems, and are known to exhibit myriad behaviors including chaotic behavior \cite{di1989limit}.

 In this article we develop a mathematical framework that is capable of  resolving the questions posed above.  Moreover, we utilize our framework to develop a chemical reaction network implementation of an arbitrarily sized  neural network with a smoothed ReLU activation function (see equation \eqref{eq:modifiedReLU} and Figure  \ref{fig:SmoothedReLU}).  Using our mathematical framework, we prove that this construction leads to a system that is exponentially reliable (i.e., the output of the system is unique and is smooth with respect to the parameters of the model, and the process converges exponentially fast) and converges from infinity in finite time (so the convergence time is uniformly bounded over all initial conditions).  See Definitions \ref{def:786766} and \ref{def:3245873} below for the precise meaning of these terms.    

The applications possible from neural network implementations of chemical reaction networks seem nearly limitless.  However, it is the view of these authors that this potential can only be achieved once a solid mathematical foundation is created upon which to build the necessary theory and, eventually, physical implementations--perhaps via DNA strand displacement~\cite{QSW2011, qian2011neural, cherry2018scaling}.  We therefore view this work as a starting point, with follow-up work focused on implementations of  neural networks that can perform gradient descent autonomously,  allowing us to relax the assumption of a fixed set of parameters,  in both supervised and unsupervised settings. Finally, while the focus of the current paper is on implementations of neural networks via deterministically modeled reaction networks, stochastic variants are possible as well.  In particular, stochastically modeled reaction networks will be the more natural choice whenever the goal is the approximation of distributions as opposed to functions \cite{poole2017chemical}. Study of such implementations is therefore another exciting avenue of future research.

We end the this section with a brief collection of some notation that will be used throughout this paper.
We  denote  the empty set by $\varnothing$.  
We  denote an arbitrary index set by  $\II$. 
We  use the notation $\dot \bigcup_{i \in \II} A_i$ to mean the union $\bigcup_{i \in \II} A_i$ where $A_i \cap A_j = \varnothing$ for all $i, j \in \II$ such that $i \ne j$. 
By {\em partition} of a set $S$, we mean a collection of nonempty subsets of $S$, $\{A_i \ne \varnothing : i \in \II\}$, such that $S = \dot \bigcup_{i \in \II} A_i$. 
For two vectors $u,v$, we will denote the Hadamard product, which is simply term-wise multiplication, via $\odot$.  That is, we have
\[
    (u\odot v)_i = u_i\cdot v_i.
\]
For a function $f:\R^c \to \R$ and a vector $u= (u_1,\dots,u_c)$ we denote by $\nabla_u f$ the vector whose $i$th component is $\displaystyle \frac{\partial f}{\partial u_i}$. 
For a vector valued function $f$, we denote by $f'(x)$  the vector whose $i$th component is $f_i'(x)$. 

The remainder of the paper is organized as follows.  Sections \ref{sec:RNs} and \ref{sec:neural_network} give  primers, included notation used in this paper, on reaction networks and neural networks, respectively.  As there are two distinct notions of networks in this paper, it is important to carefully separate the two.
In Section~\ref{sec:reaction_NN}, we present our main theoretical results pertaining to ODE implementations of neural networks.  In Section~\ref{sec:implement}, we demonstrate how to utilize our theoretical results to construct a reaction network that implements a given neural network with a fixed set of parameters and a smoothed ReLU activation function.  In Section~\ref{sec:example}, we provide a detailed example, including a demonstration of how to utilize our theory to implement  neural networks with different activation functions.

 \section{Reaction networks}\label{sec:RNs}
 
Reaction networks are graphical representations of interactions between different ``species.''  In this context, the word species may refer to different organisms (for example, if you are modeling the interactions among foxes and hares) or to different chemical compounds (for example, if you are modeling the dynamics of a biochemical process within a cell).  In this paper, we are primarily interested in the latter context and will also refer to reaction networks as ``chemical reaction networks,'' as is common.

\begin{definition}\label{def:CRN}
    A \textit{reaction network}, or \textit{chemical reaction network}, consists of a nonempty and finite set of species $\S$ and directed graph with vertices $\C$ and directed edges $\RR$ satisfying the following conditions:
    \begin{itemize}
        \item each vertex is a linear combination of the species over the non-negative integers;
        \item every species appears with a positive coefficient in at least one vertex;
        \item no two vertices are the same linear combination of the species;
        \item each vertex is connected by a directed edge to at least one other vertex;
        \item there are no directed edges from a vertex to itself.
    \end{itemize}
    Vertices of the reaction network are called \textit{complexes}, and directed edges are called \textit{reactions}.  If $\com, \widehat\com\in \C$  are two complexes and there is a directed edge from $\com$ to $\widehat \com$, we will write $\com \to \widehat \com\in \RR.$  We will often denote a reaction network via $\G = (\S,\C,\Re).$\hfill $\triangle$
\end{definition}

When considering general/theoretical systems, we will typically denote the species as $\S = \{X_1,\dots,X_n\}$, in which case our vertices/complexes are of the form 
\[
    \com = b_1 X_1+ \cdots + b_n X_n,  \text{ where } b_i \in \Z_{\ge 0} \text{ for each } i \in \{1,\dots,n\}.
\]
We will use the common slight abuse of notation by also associating a complex $\com \in \C$ with the vector in $\Z^n_{\ge 0}$ whose $i$th component is $b_i.$  Using this convention, we define the \textit{reaction vector} for a reaction $\com \to \widehat \com \in \RR$ as 
\[
    \zeta_{\com\to \widehat \com} = \widehat \com - \com \in \Z^n_{\ge 0}.
\]
When considering specific examples, we will use more suggestive notation for our species.
We present two examples to solidify the notation.  It is a common practice, which we use here, to specify a reaction network by writing all the reactions, since the sets $\S$, $\C$, and $\Re$ are contained in this description.

\begin{example}\label{example:CRNNotation1}
    Consider the following reaction network with two species, $\S = \{X_1,X_2\}$
    \begin{align*}
        X_1 + X_2 &\to 2X_2\\
        X_2 &\to X_1.
    \end{align*}
    Here the set of complexes/vertices is $\{X_1 + X_2, \ 2X_2, \ X_2, \ X_1\}$.
    For example, it could be that $X_1$ is an active form of a protein and $X_2$ is the inactive form and two actions can take place:  (i) an inactive protein can catalyze the inactivation of an active protein, and (ii) an inactive protein can spontaneously become active.  For another example, we could use the network to model disease spread, with $X_1$ representing healthy/susceptible individuals and $X_2$ representing those that are infected. 
    
    Whatever the modeling scenario is, the network is the same and consists of two species, four complexes (vertices), and two reactions. The associated reaction vectors are
    \begin{align*}
        \zeta_{X_1 + X_2 \to 2X_2} = \left[ \begin{array}{r} -1\\1 \end{array}\right] \quad \text{and} \quad  \zeta_{X_2 \to X_1} = \left[ \begin{array}{r} 1\\-1 \end{array}\right].
    \end{align*}\hfill $\square$    
\end{example}

\begin{example}\label{example:CRNNotation2}
    Consider the following reaction network with three species, $\S = \{X_1,X_2,X_3\}$
    \[
    0 \to X_1 + X_2, \quad X_1 \rightleftharpoons X_3 \leftarrow X_1 + X_3.
    \]
    In this example, molecules of $X_1$ and $X_2$ enter the system from outside of it via $0 \to X_1 + X_2$, $X_1$ can spontaneously convert to $X_3$ and vice versa via the two reactions $X_1 \rightleftharpoons X_3$, and $X_3$ catalyzes the removal of $X_1$ molecules via the reaction $X_1 + X_3 \to X_3$.\hfill $\square$
\end{example}

The reaction network tells us the constituent species of a model, the counts of each of the species required for each of the reactions to take place, and the counts of the products of each reaction.  Moreover, the reaction vectors give the net changes in the counts of the species due to the occurrence of the different reactions.  However, the reaction network does not determine the \textit{rates} at which the different reactions take place.  

A common modeling choice is to assume that the vector of concentrations of the species at time $t\ge 0$, denoted by $x(t) \in \R^n_{\ge 0}$, satisfies a system of the form
\begin{align}\label{eq:55550}
    \dot x(t) = \sum_{\com \to \widehat \com \in \Re} \lambda_{\com \to \widehat \com}(x(t)) \zeta_{\com\to \widehat \com},
\end{align}
where the enumeration is over all of the reactions and $\lambda_{\com \to \widehat \com}: \R^n_{\ge 0}\to \R_{\ge 0}$ is some function.  The set of functions $\Lambda = \{\lambda_{\com\to \widehat \com}\}$ is called the \textit{kinetics of the model}, and the most common form of kinetics, and the one we use throughout, is termed \textit{mass-action kinetics} in which 
\[
    \lambda_{\com \to \widehat \com}(x) = \kappa_{\com \to \widehat \com} \prod_{i = 1}^n x_i^{\com_{i}},
\]
for some choice of rate constant $\kappa_{\com\to\widehat \com} > 0$ and where $\com_{i}$ is the $i$th component of $\com$ viewed as a vector in $\Z^n_{\ge0}$.  When $\Lambda$ is mass-action kinetics, we say that $(\G,\Lambda)$ is a \textit{mass-action system}.   When mass-action kinetics is used, it is common to place the reaction rate constant next to the associated arrow in the graph,  $\com \xrightarrow{\kappa_{\com\to \widehat \com}} \com.$

\section{Neural networks}\label{sec:neural_network}

We give a basic introduction to the type of neural networks we consider in this paper--feed forward.  For more on neural networks, see~\cite{bishop1995neural, mitchell1997machine, murphy2012machine, Nielsen2015, shalev2014understanding}. Loosely, a neural network is a graph that gives a visual depiction of a certain type of mathematical function. 
The class of functions they can represent, which will be detailed below, have many parameters, and are ``universal'' in that they can be used to approximate any continuously differentiable function arbitrarily well~\cite{cybenko1989approximation,hornik1991approximation}. The power of neural networks comes from the fact that they can be ``trained'' from data, which simply means that the parameters of the function can be calibrated algorithmically so as to produce a final function capable of carrying out some pre-determined task (such as image recognition).

Below, we will first introduce the basic structure of a neural network.  Next, we will explain how each such graph, when combined with a choice of parameters and an  ``activation function,''  is simply a representation for a particular function.  We will call such a network, in which all parameters, together with the activation function, are fixed, a ``hardwired'' neural network.  Finally, we will discuss how neural networks can be trained by finding parameters for the network that minimize (at least locally) a desired cost function.  This minimization is often performed by a version of gradient descent and is termed \textit{backpropagation} in the field.

\tikzset{%
  every neuron/.style={
    circle,
    draw,
    thick,
    minimum size=1.1cm
  },
  neuron missing/.style={
    draw=none, 
    scale=4,
    text height=0.333cm,
    execute at begin node=\color{black}$\vdots$
  },
}
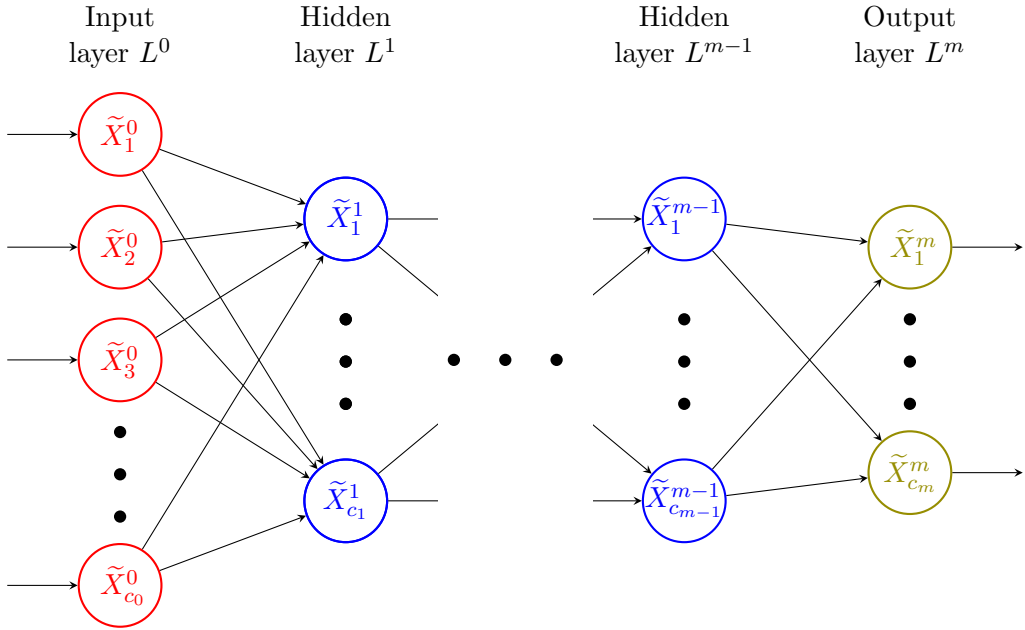
\begin{figure}
\begin{equation*}
\begin{tikzpicture}[x=1.5cm, y=1.5cm, >=stealth]

\foreach \m/\l [count=\y] in {1,2,3,missing,4}
  \node [every neuron/.try, neuron \m/.try,color=red] (input-\m) at (0,2.5-\y) {};

\foreach \m [count=\y] in {1,missing,2}
  \node [every neuron/.try, neuron \m/.try, color=blue] (hidden1-\m) at (2,2-\y*1.25) {};

\foreach \m [count=\y] in {1,missing,2}
\node [every neuron/.try, neuron \m/.try, color=blue ] (hidden1-\m) at (2,2-\y*1.25) {};

\foreach \m [count=\y] in {1,missing,2}
  \node [every neuron/.try, neuron \m/.try, color=blue ] (hidden2-\m) at (5,2-\y*1.25) {};

\foreach \m [count=\y] in {1,missing,2}
  \node [every neuron/.try, neuron \m/.try,color=olive ] (output-\m) at (7,1.5-\y) {};

\foreach \l [count=\i] in {1,2,3,c_0}
  \draw [<-] (input-\i) -- ++(-1,0)
    node [right=1.1cm] {{\cre $
    \wt X^{0}_{\l}$}};

\foreach \l [count=\i] in {1,c_1}
  \node [below=0.16cm] at (hidden1-\i.north) {{\color{blue} $\wt X^{1}_{\l}$}};

\foreach \l [count=\i] in {1,c_{m-1}}
  \node [below=0.14cm] at (hidden2-\i.north) {{\color{blue} $\wt X^{m-1}_{\l}$}};

\foreach \l [count=\i] in {1,c_m}
  \draw [->] (output-\i) -- ++(1,0)
    node [left=1cm] {{\color{olive} $\wt X^{m}_{\l}$}};

\foreach \i in {1,...,4}
  \foreach \j in {1,...,2}
    \draw [->] (input-\i) -- (hidden1-\j);

\foreach \i in {1,...,2}
  \foreach \j in {1,...,2}
    \draw [->] (hidden1-\i) -- (hidden2-\j);

    \foreach \i in {1,...,2}
    \foreach \j in {1,...,2}
    \draw [->] (hidden2-\i) -- (output-\j);

\node [align=center, above] at (0,2) {Input\\layer $L^0$};
\node [align=center, above] at (2,2) {Hidden \\layer $L^1$};
\node [align=center, above] at (5,2) {Hidden \\layer $L^{m-1}$};
\node [align=center, above] at (7,2) {Output \\layer $L^m$};
\node[fill=white,scale=4,inner xsep=0pt,inner ysep=5mm] at ($(hidden1-1)!.5!(hidden2-2)$) {$\dots$};
\end{tikzpicture}
\end{equation*}
\caption{The graphical structure of a neural network. The red, blue, and green nodes are input nodes, hidden nodes, and output nodes, respectively.  An arrow from one node to another is a representation of the direction of influence, i.e. an edge in $D$. The value of the ``tail'' node is input for computation of the value of the ``head'' node. }
\label{fig:NN}
\end{figure}

\subsection{Structure of a neural network}
Formally, a \textit{feedforward neural network} $G =(V,D)$ is a directed graph on a set of nodes $V$ and a set of directed edges $D \subseteq V \times V$, such that there is a partition of $V$ into {\em layers} $L^\ell$, $V = \displaystyle\dot\bigcup_{\ell=0}^m L^\ell$, with the property that $(\wt{X}',\wt{X}) \in D$ if and only if $\wt{X}' \in L^\ell$ and $\wt{X} \in L^{\ell+1}$ for some $\ell \in \{0, \ldots, m-1\}$. We will refer to the set $L^{\ell}$ as the {\em $\ell^{th}$ layer} of $G$, so $G$ has $m+1$ layers, and each $L^{\ell}$, with $0 \le \ell \le m,$ contains $c_\ell > 0$  nodes. The nodes in $L^0$ are referred to as {\em input nodes}, while those in $L^m$ as the {\em output nodes}. All  nodes in $\displaystyle\bigcup_{\ell=1}^{m-1} L^{\ell}$ are referred to as {\em hidden nodes} or {\em intermediate nodes}. We use {\em input layer}, {\em output layer}, and {\em hidden layer} to refer to each layer that contains the corresponding nodes. Note that we can partition $D$ as follows 
\begin{align} \label{eq:partD}
D = \dot \bigcup_{\ell: 1 \le \ell \le m} \dot \bigcup_{\wt X \in L^{\ell}} \dot \bigcup_{\wt X' :(\wt X',\wt X) \in D} (\wt X',\wt X).
\end{align}
For the sake of brevity,  for the remainder of the paper we will refer to feedforward neural networks simply as neural networks.

Indices can often become burdensome when working with neural networks.  Thus, we  minimized their use in the preceding explanation, and will continue to do so when possible.  That said, it will be useful to have  an enumeration and so we will denote the $j$th node in layer $\ell$ by $\wt X^\ell_j$. 
See Figure \ref{fig:NN}.

\subsection{A neural network  as a mathematical function}

We label each non-input node and each directed edge with a real number. A label for a non-input node is termed a \textit{bias} whereas a label for an edge is termed a \textit{weight}.  
Moreover, we associate an {\em activation function} with each non-input node, which will be described fully below.
We will call a neural network with such a labeling and a choice of activation function a {\em hardwired neural network}.
For each $\ell \in \{1,\dots,m\}$, we will denote by $\beta^\ell \in \R^{c_\ell}$ the vector whose $i$th component gives the bias for node $\wt X^\ell_{i}$, and will denote by $W^\ell \in \R^{c_{\ell} \times c_{\ell-1}}$ the matrix whose $(i,j)$th entry represents the weight of the edge between nodes $\wt X^{\ell-1}_j$ and $\wt X^{\ell}_i.$  Note that the ordering of the indices of $W^\ell$ seems backwards at first glance.  However, this ordering will make certain expressions slightly cleaner later, and is standard in the field.

We will use the notation $\BB$ for the assignment of node labels (biases) and $\WW$ for the assignment of edge labels (weights). That is, for each of $\ell\in \{1,\dots,m\},$ we have $\BB(\ell) = \beta^\ell$ and $\WW(\ell) = W^\ell$.  Collectively $\PP=(\BB,\WW)$ is an assignment of labels to $G=(V,D)$.   So long as we have also chosen an activation function $\varphi$, which will be described directly below, we may denote the resulting hardwired neural network via $(G,\PP,\varphi)$.

Let $\varphi: \R \to \R_{\ge 0}$ be a continuous, monotonic function, which is then extended to $\varphi: \R^c \to \R_{\ge 0}^c$ for $c \in \{2,3,\dots\}$ by letting $\left(\varphi(y)\right)_i = \varphi(y_i)$. We present a few examples of some  so-called {\em activation functions} $\ds \varphi: \R \to \R_{\ge 0}$. 
\begin{enumerate}
    \item $\varphi_1(y) = \frac{1}{1 + e^{-y}}$.  This sigmoid function is a bijection onto the interval $(0,1)$, and is used quite commonly. 
    See Figure \ref{fig:sigmoid}.
    
    \item $\varphi_2(y) = \max(0,y)$.  This function is termed the ReLU function (Rectified Linear Units). See Figure \ref{fig:ReLU}.
  
\item Let $\etaa \ge 0$ and define 
\begin{align}\label{eq:modifiedReLU}
    \varphi_3(y) = \frac{1}{2}\left(y + \sqrt{y^2 + 4\etaa}\right).
\end{align}
This function is a smoothed version of the ReLU function, while remaining strictly monotonic, and will play a key role in the present work. See Figure \ref{fig:SmoothedReLU}.
\end{enumerate}

\begin{figure}
\minipage{0.45\textwidth}
  \includegraphics[width=0.8\linewidth]{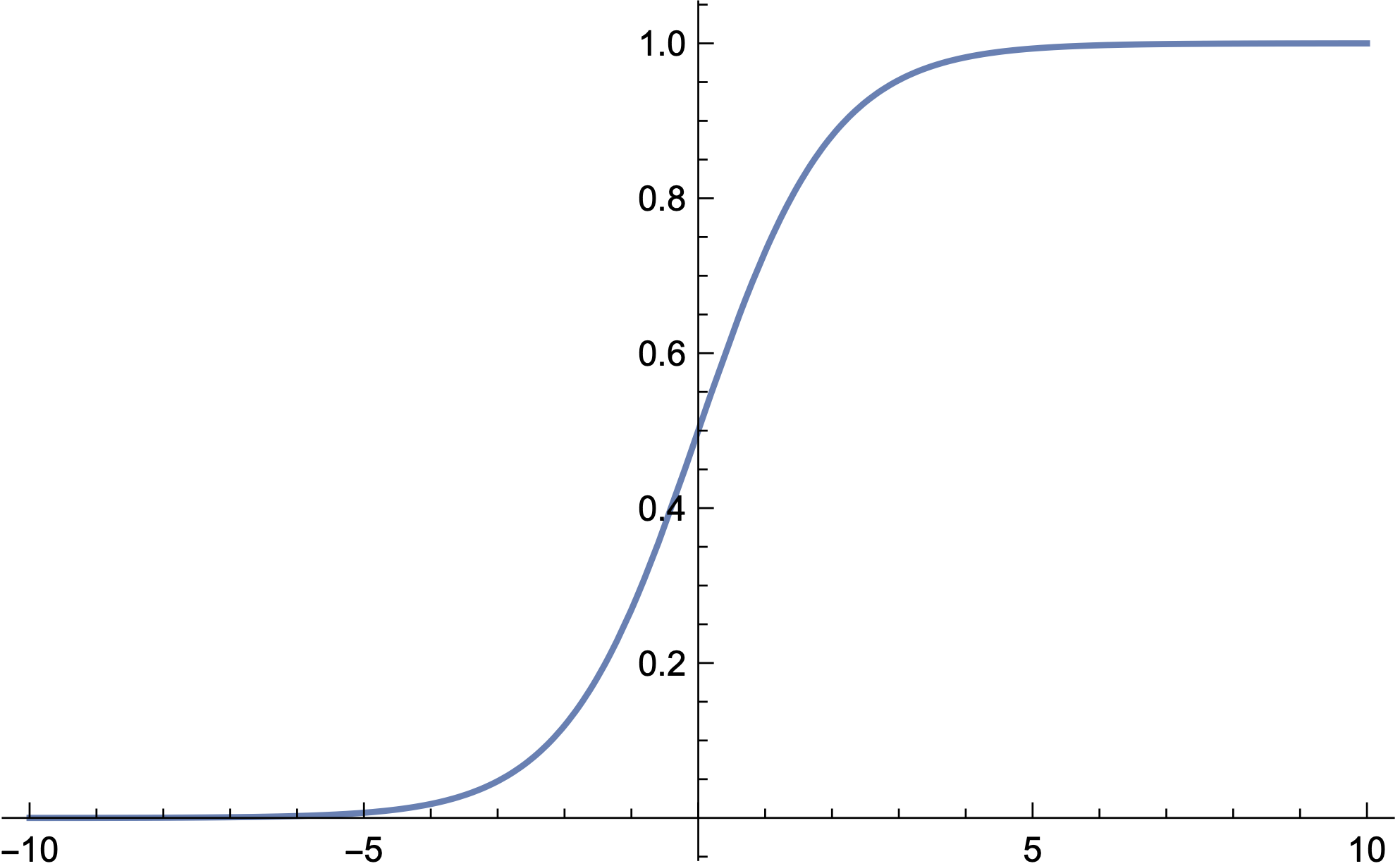}
  \caption{The function $\frac1{1+e^{-y}}$}\label{fig:awesome_image1}
  \label{fig:sigmoid}
\endminipage
\minipage{0.45\textwidth}
  \includegraphics[width=0.8\linewidth]{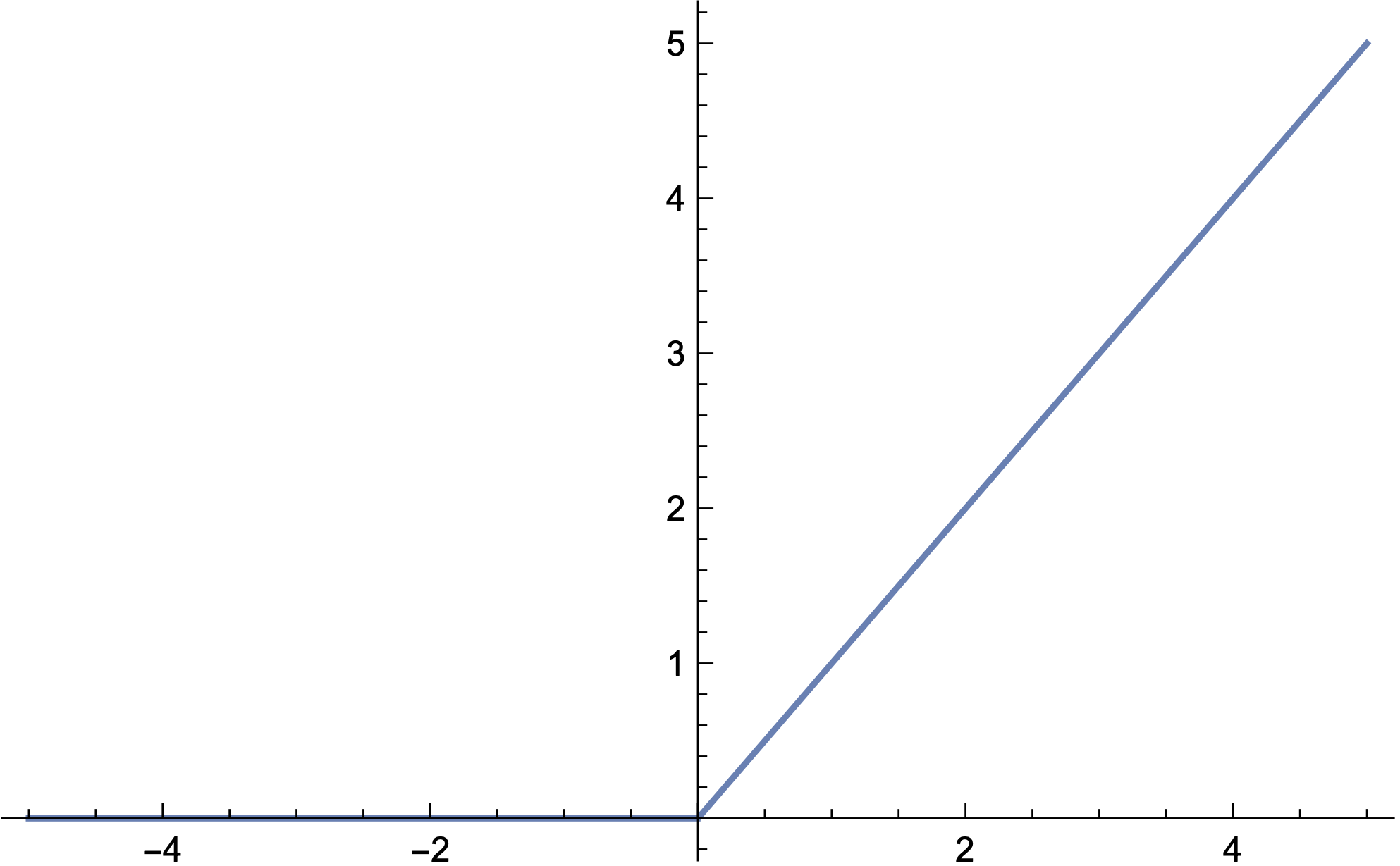}
  \caption{The ReLU activation function.}\label{fig:awesome_image2}
  \label{fig:ReLU}
\endminipage

\minipage{0.45\textwidth}%
  \includegraphics[width=0.8\linewidth]{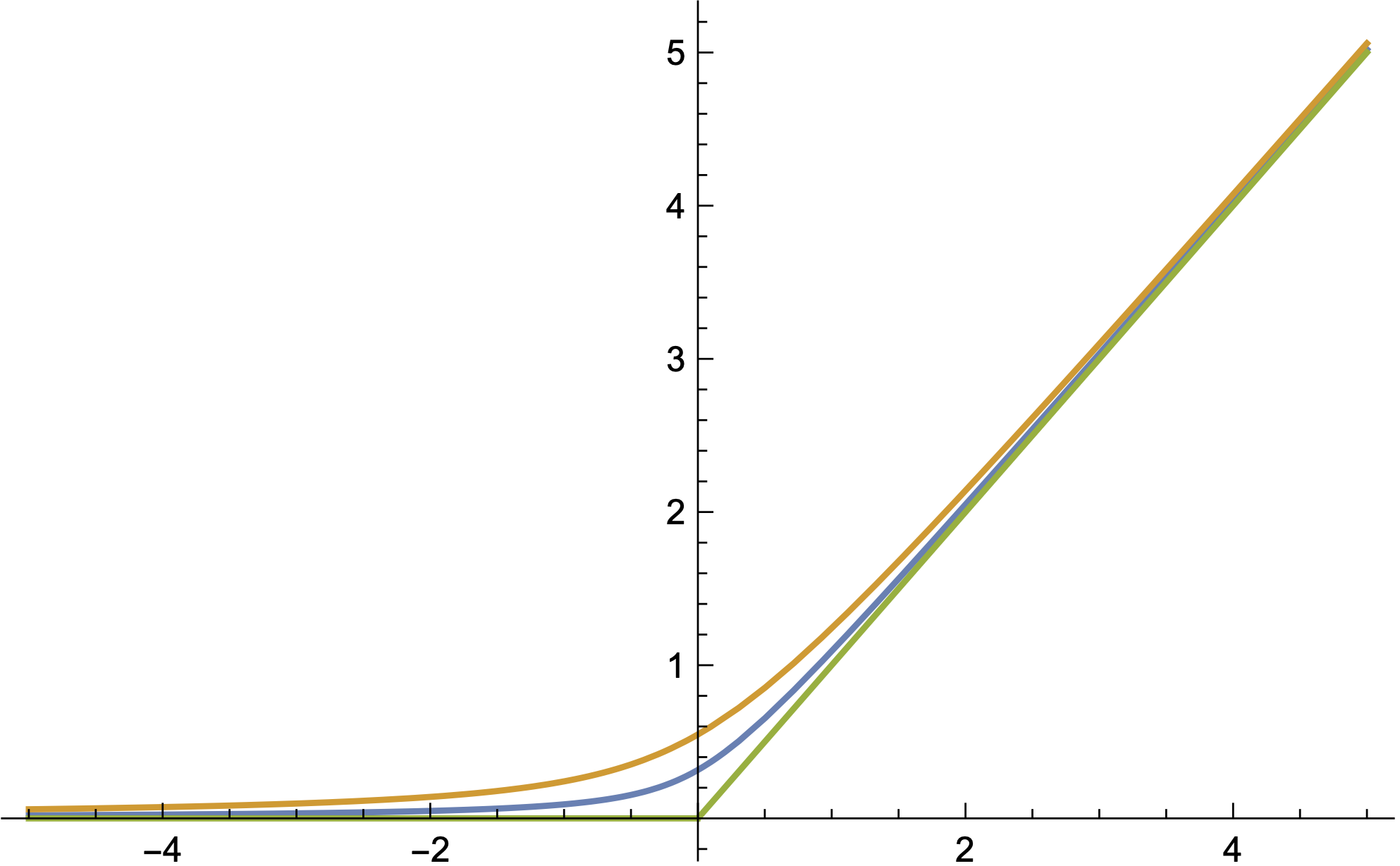}
  \caption{The function $\frac12\left( y + \sqrt{y^2 + 4\etaa}\right)$ for $\etaa = 0.3, 0.1, 0$. Note that the $\etaa=0$ case is the ReLU.}\label{fig:awesome_image3}
  \label{fig:SmoothedReLU}
\endminipage

\label{fig:activationfunctions}
\end{figure}

A pair of consecutive layers $L^{\ell-1}$ and $L^{\ell}$ along with all edges between the two layers, encode   a function $\psi^\ell: \R^{c_{\ell-1}} \to \R^{c_\ell}$ which is defined via 
\begin{align} \label{eq:func_psi}
\psi^\ell(y) = \varphi(W^\ell y + \beta^\ell).
\end{align}
Taking  compositions, a hardwired neural network is then simply a visual representation for  the function $\Psi_{(G,\PP,\varphi)}: \R^{c_0} \to \R^{c_m}_{\ge 0}$ defined via
\[
    \Psi_{(G,\PP,\varphi)} = \psi^m \circ \psi^{m-1} \circ \cdots \circ \psi^1.
\]
 Thus, the function associated with a neural network is simply a sequence of compositions that alternates between linear functions (via matrix multiplication and vector addition) and non-linear functions (via application of the activation function).

It is useful to provide a bit more notation before moving on.  Suppose that $d \in \R^{c_0}$ is the input to the function $\Psi_{(G,\PP,\varphi)}$ (or, equivalently, the function $\psi^1$).  We then define $a^0 = d$ and for $1 \le \ell \le m$ we define 
\begin{align}\label{eq:firststep}
    z^\ell(d) &= W^\ell a^{\ell-1}(d)+\beta^\ell, \quad \text{ and } \\ 
    a^\ell(d) &= \varphi(z^\ell(d)),\label{eq:secondstep}
\end{align}
recursively, where we recall that the $i$th component of $\varphi(z^\ell(d))$ is $\varphi(z^\ell_i(d)).$  The vector $a^\ell(d)$ is said to give the \textit{activations} of the nodes in the $\ell$th layer.  With these definitions we have that for any $\ell \in \{1,\dots,m\}$
\[
    \Psi_{(G,\PP,\varphi)}(d) = \psi^m\circ \cdots \circ \psi^\ell(a^{\ell-1}(d)).
\]
Moreover, note that $\Psi_{(G,\PP,\varphi)} = a^m,$ which is a useful compact notation for $\Psi_{(G,\PP,\varphi)}$.

\subsection{Learning from data}

Suppose now that we are given $N$ pieces of data of the form $(d,\tau(d)) \in \R^{c_0 \times c_m}$. For example, and to take a common example, $d\in \R^{784}$ could be the  values of the $28\times 28=784$ pixels in a gray-scale image of a hand-drawn number, and $\tau(d) \in \R^{10}_{\ge 0}$ could be the vector $e_i$ (the vector with a 1 in the $i$th digit and zeros elsewhere) if the image is that of a hand-drawn $i-1$.  Here $d$ is considered the input data and $\tau(d)$ is considered the ``truth.''    
We could then construct a neural network with $c_0 = 784$ and $c_m = 10$ simply by choosing (i) the number of hidden layers,  and how many nodes per layer, (ii) biases and weights, $\PP = (\BB,\WW)$, for the nodes and directed edges, and (iii) an activation function $\varphi$.  In such a manner, our hardwired function $\Psi_{(G,\PP,\varphi)}$ is determined.

At this point, we could ask how closely our function matches the ``truth'' by looking at some cost function.  Therefore, assume that we have a cost function of the form
\begin{equation}\label{eq:firstcostfunction}
    \text{Cost}(\PP)= \frac{1}{N} \sum_d C(d,\PP) = \frac{1}{N} \sum_d C(d),
\end{equation}
where the sum is over all the data and $C$ is a function giving a measure of how closely $\Psi_{(G,\PP,\varphi)}(d) = a^m(d)$ approximates $\tau(d)$. The second equality above points out that for notational convenience we will typically suppress the dependence of the parameters $\PP = (\BB,\WW)$ in $C$.  Some of the most commonly used  cost functions are given below:
\begin{enumerate}
    \item the quadratic cost function, in which case
    \begin{equation}\label{eq:i7087778}
    C(d) =  \frac12(\Psi_{(G,\PP,\varphi)}(d) - \tau(d))^2=  \frac12(a^m(d) - \tau(d))^2;
    \end{equation}
    \item the one-norm cost function, in which case
    \[
    C(d) =  \left| a^m(d) - \tau(d)\right|;
    \]
    \item the cross-entropy cost function, in which case
    \[
    C(d) = -\left[\tau(d) \ln(a^m(d)) + (1-(\tau(d)))\ln(1 - a^m(d)) \right].
    \]
\end{enumerate}
In this paper, we will take $C$ to be given by the quadratic cost function \eqref{eq:i7087778}. This choice of cost function does not play a significant role in the present work.

Of course, we did not specify how we chose our parameters $\PP = (\BB,\WW)$ for the model.  Supposing we choose them randomly somehow, there is no reason our function $\Psi_{(G,\PP,\varphi)}$ should be a good approximation for $\tau$ for the given data.  Therefore, we would like to find those parameters $\PP$ that minimize the cost function and to do so it is natural to use gradient descent.  Thus,
we need to be able to efficiently compute 
$\nabla_{\beta^\ell}\text{Cost}$ and $\frac{\partial \text{Cost}}{\partial W^\ell_{ij}}$
 for each appropriate value of $\ell, i$, and $j$. Because of the sum in \eqref{eq:firstcostfunction}, it is sufficient to compute the gradient of $C(d)$, and these can be computed as follows \cite{Nielsen2015} 
\begin{align}
\label{eq:allthederivatives}
\begin{split}
    \delta^L(d) &= \nabla_{a^m} C(d) \odot \varphi'(z^m(d))\\
    \delta^\ell(d) &= ((W^{\ell + 1})^T \delta^{\ell+1}(d))\odot \varphi'(z^\ell(d))\\
    \nabla_{\beta^\ell} C(d) &= \delta^\ell(d)\\
    \frac{\partial C(d)}{\partial W^\ell_{ij}} &= \delta_i^\ell(d) a_j^{\ell-1}(d)
    \end{split}
\end{align}
where $\nabla_{a^m}C(d)$ is the gradient of $C(d)$ with respect to $a^m$. For example, if $C$ is given by the quadratic cost function \eqref{eq:i7087778}, we have
\[
    \nabla_{a^m} C(d) = (a^m(d) - \tau(d)).
\]

\section{Neural networks and ODEs}
\label{sec:reaction_NN}

Fix a  hardwired neural network $G = (V,D)$ with parameters $\PP = (\BB,\WW)$, whose $\ell^{\text{th}}$ layer contains $c_\ell$ nodes, in which each node has  activation function $\varphi$. Let $W^\ell, a^\ell,$ and $\beta^\ell$ be as in the previous section.  

Now consider a system of ODEs defined recursively via 
\begin{align}\label{eq:5456897}
    x_i^0(t) &\equiv d_i, \hspace{1in} \text{ for some fixed } d \in \R^{c_0}_{ \ge 0},\\
\label{eq:870877807}
    \frac{d}{dt} x_i^\ell (t) &= f^\ell_i(x^{\ell-1}(t),x_i^\ell(t)), \quad \text{for $\ell \in \{1, \dots, m\}$},
\end{align}
where $x^\ell \in \R^{c_\ell}_{\ge 0}.$   Here we use $d\in \R^{c_0}_{\ge 0}$ to denote our initial condition as it represents the input ``data'' to the system.
Note that $x^{\ell-1}$ is acting as an external ``forcing function'' on $x^\ell.$
In particular, the system above has a natural feed-forward structure.   For  $r \in \{0,1,\dots,m\}$ we denote by $\mathcal{F}^r$ the subsystem of \eqref{eq:5456897}--\eqref{eq:870877807} consisting of only  those terms $x_i^\ell$ for which  $\ell \le r$.  Note that for any $1\le r \le m$, $\mathcal{F}^r$ contains $\mathcal{F}^{r-1}$ and that $\mathcal{F}^m$ is all of \eqref{eq:5456897}--\eqref{eq:870877807}.

\begin{definition}\label{def:implement}
    Suppose that for each fixed choice of $d\in  \R^{c_{0}}_{\ge 0}$  the system \eqref{eq:5456897}--\eqref{eq:870877807} has a unique solution $\{x^\ell\ :\ 1\le \ell \le m\}$ that satisfies 
    \[
        \lim_{t \to \infty} x^\ell(t) = \varphi(W^\ell a^{\ell-1}(d) + \beta^\ell) = a^\ell(d)\in \R^{c_\ell}_{\ge 0}
    \]
     for any choices of $x_i^\ell(0)\in \R_{\ge 0}$ for $\ell \ge 1$.
Then we say that the  system \eqref{eq:5456897}--\eqref{eq:870877807} {\em implements the neural network} $(G, \PP, \varphi)$. \hfill $\triangle$
\end{definition}

Note that in order for a system to implement a neural network according to the above definition, it is \text{not} enough for the system to simply convert  inputs,  $d$, to the correct outputs, $a^m(d) = \Psi_{(G,\PP,\varphi)}(d)$.  Instead, we require that the system calculates the activations for each node in the network, i.e., $a^\ell(d)$ for all $\ell \le m$, and do so for any choice of initial condition in layers $1$ through $m$.

\begin{example}\label{example:ReLU}
Consider a system \eqref{eq:5456897}--\eqref{eq:870877807} with
\begin{equation}\label{eq:000998990}
    f_i^\ell(x^{\ell-1},x^\ell_i) =  \etaa + \rho_i^\ell(x^{\ell-1}) x_i^\ell  - (x_i^\ell)^2,
\end{equation}
where
\begin{align}\label{eq:rho}
    \rho_i^\ell (x^{\ell-1}) = \left(W^\ell x^{\ell-1} + \beta^\ell\right)_i = \sum_{j=1}^{c_{\ell-1}} W^\ell_{ij}x^{\ell-1}_j + \beta^\ell_i.
\end{align}
We claim that the system \eqref{eq:5456897}--\eqref{eq:870877807} with this choice of $f_i^\ell$ implements a neural network with the smoothed  ReLU function \eqref{eq:modifiedReLU}.  This statement will be proved rigorously below once we have some additional mathematical machinery. \hfill $\square$
\end{example}

For a particular choice of $\ell$ and $i$, we can think of the one-dimensional system \eqref{eq:870877807} as simultaneously implementing both the linear updating step \eqref{eq:firststep} and evaluation with the activation function \eqref{eq:secondstep}  for node $i$ in layer $\ell$.  This observation motivates the following.

\begin{definition}
If the system \eqref{eq:5456897}--\eqref{eq:870877807} implements the neural network $(G,\PP,\varphi)$, then  \eqref{eq:870877807} is termed the \textit{activation system} for node $i$ in layer $\ell$.\hfill $\triangle$
\end{definition}

The following definition is added for completeness.
\begin{definition}
We will say that $y:\R_{\ge0} \to \R^n$ \textit{converges exponentially} to $\widehat y\in \R^n$, and will write $y(t)\xrightarrow{\text{exp}} \widehat y$ if there are $c,\etaa > 0$ for which $|y(t) - \widehat y| \le c e^{-\etaa t}$ for all $t \ge 0.$ \hfill $\triangle$
\end{definition}

The following definition characterizes some nice properties that  activation systems \eqref{eq:870877807} can have. 

\begin{definition} 
Consider the following one-dimensional system in which $y:\R_{\ge 0} \to \R^p$ is some forcing function,
\begin{equation}\label{eq:978786}
    \frac{d}{dt} x(t) = f(y(t),x(t)).
\end{equation}
\begin{enumerate}
    \item Let $q >0$. The system \eqref{eq:978786} is said to have \textit{$q$-polynomial decay} if for any compact set $\mathcal{K} \subset \R^p$ there is an $M>0$ and a constant $c>0$ such that when $y \in \mathcal{K}$ and $x > M$ we have
    \[
        f(y,x) \le -c x^{q}.
    \]
    
   \item 
    System \eqref{eq:978786} is said to be \textit{exponentially feed-forward} if for each  $\widehat y \in \R^p$ there is an $\widehat x\in \R$ such that $y(t)\xrightarrow{\text{exp}} \widehat y$ implies $x(t)\xrightarrow{\text{exp}} \widehat x$, assuming $x(t)$ exists for all $t \ge0$. \hfill $\triangle$
\end{enumerate}
\end{definition}

Thus, the system \eqref{eq:978786} has $q$-polynomial decay if it  decays faster than the solution to $\dot u = -cu^q$ when (i) the forcing function takes values that are not too large (quantified by $\mathcal{K}$) and (ii) the current value of the process is large (quantified by $M$).   Note that for $u(0)>0$, the solution to $\dot u = -c u^q$ converges from infinity  in finite time if $q>1$.  For completeness, we have proven this in Proposition \ref{prop:6587567} in Appendix \ref{appendix}. 

The usefulness of a system of the form \eqref{eq:978786} being exponentially feed-forward comes from the fact that we would like to be able to understand the long-term behavior of $\dot x = f(y(t),x(t))$ via an understanding of the long-term behavior of $\dot x = f(\widehat y, x(t))$.  We note with the following simple example that one is \textit{not} always able to do so.

\begin{example}\label{example:notff}
    Consider the system of the form \eqref{eq:978786} with 
    \[
       f(y,x) = \begin{cases}
       1 & \text{ if } y > 1\\
       -x & \text{ if } y \le 1
       \end{cases}.
    \]
    The system with $y(t) = 1 + e^{-t}$ satisfies $y(t)\xrightarrow{\text{exp}} 1$.  However, for this particular choice of $y(t),$ we have $y(t) > 1$ for all $t\ge 0$.  Thus, $x(t) = x(0) + t$, which does not converge to the fixed point of $\dot x = f(1,x)$, which is zero regardless of $x(0)$.
    \hfill $\square$
\end{example}

Given the discussion above, it will be useful to consider dynamical systems of the form
\[
    \dot x(t) = f(y,x(t)),
\]
where $y$ should be thought of as a (time-independent) collection of parameters, but now $x$ is allowed to be higher-dimensional.

\begin{definition}\label{def:786766}
  Suppose that $\dot x(t) = f(y,x(t))$ with $x(t) \in \R^n_{\ge 0}$ and $y\in \R^p_{>0}$ is a parametrized dynamical system such that for any choice of $x(0)\in \R^n_{\ge 0}$ and  $y\in \R^p_{> 0}$ the  system   has a unique solution. We will say that the  system   
  
\begin{enumerate}

\item is  {\em reliable} if there is a 
continuously differentiable function $\mfX:\R^p_{>0} \to \R^n_{>0}$ such that for any choice of $x(0)\in \R^n_{\ge 0}$, we have  $\lim_{t \to \infty} x(t) = \mfX(y)$;

\item {\em converges from infinity in finite time} if there is a compact set $\mathcal{K} \subset \R^n_{\ge 0}$ and a $T:\R^p_{>0} \to \R_{>0}$ such that $x(t) \in \mathcal{K}$ for any $t \ge T(y)$ and $x(0)\in\R^n_{\ge 0}$; 

\item is \emph{exponentially reliable} if it is reliable and there is a  $\lambda:\R^p_{>0} \to \R_{>0}$ such that 
\begin{equation*}
   \hspace{1.5in} \abs{x(t) - \mfX(y)} \le \abs{x(0) - \mfX(y)}e^{-\lambda(y) t}. \hspace{1.6in} \triangle
\end{equation*}
\end{enumerate}
\end{definition}
Note that the definition of \textit{reliable} does not rule out the existence of fixed points outside of $\R^n_{\ge 0}$.

The main question we have is the following: when can we conclude that the fully parametrized system \eqref{eq:5456897}--\eqref{eq:870877807}
has our desirable properties (reliability, convergence from infinity in finite time, and exponential reliability).  The following theorem shows that these properties follow from easily checked conditions on the functions $f_i^\ell.$  In the theorem below, the vector of parameters $y$ should be thought of as a  steady state value for $x^{\ell-1}(t).$

\begin{theorem}\label{thm:main}
 Consider the system \eqref{eq:5456897}--\eqref{eq:870877807}.  Suppose that for each $\ell \in \{1,\dots, m\}$ and $i \in \{1,\dots, c_{\ell}\}$  the dynamical system 
 \[
    \frac{d}{dt} x(t) = f_i^\ell(y,x(t)), \quad x(t) \in \R, \quad y\in \R^{c_{\ell-1}}_{\ge 0},
 \]
 is reliable.  Moreover, assume that 
\[
    \frac{d}{dt} x(t) = f_i^\ell(y,x(t))
\]
has $q$-polynomial decay for some $q > 1$ and is exponentially feed-forward.  Then the system \eqref{eq:5456897}--\eqref{eq:870877807}  converges from infinity in finite time and is exponentially reliable.
\end{theorem}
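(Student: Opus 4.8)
The plan is to exploit the feed-forward structure of \eqref{eq:5456897}--\eqref{eq:870877807} and to induct on the layer index $\ell$, showing that each subsystem $\mathcal{F}^r$ converges from infinity in finite time and is exponentially reliable, with a limit map that is continuously differentiable in the input $d$ (which plays the role of the parameter $y$ in Definition \ref{def:786766}). The natural object to track is the map defined recursively by $\mfX^0(d) = d$ and $\mfX^\ell(d) = (\mfX_i^\ell(\mfX^{\ell-1}(d)))_{i}$, where $\mfX_i^\ell$ denotes the continuously differentiable fixed-point map guaranteed by reliability of the scalar system $\dot x = f_i^\ell(y,x)$.

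For the base case $\ell = 1$ the forcing function is $x^0 \equiv d$, a constant, so each component obeys the autonomous scalar equation $\dot x_i^1 = f_i^1(d, x_i^1)$. Convergence from infinity in finite time follows from $q$-polynomial decay with $q > 1$: for $x > M$ one has $f_i^1(d,x) \le -c x^q$, so comparison with $\dot u = -c u^q$ together with Proposition \ref{prop:6587567} forces $x_i^1$ into $[0,M]$ after a finite time independent of $x_i^1(0)$, while continuity of $f_i^1$ makes $M$ a one-sided barrier so the trajectory cannot re-escape. Exponential reliability holds because the constant forcing trivially satisfies $x^0 \xrightarrow{\text{exp}} d$, so the exponentially feed-forward hypothesis gives $x_i^1(t) \xrightarrow{\text{exp}} \widehat x_i^1$; feeding the same constant forcing into reliability and using uniqueness of limits identifies $\widehat x_i^1$ with $\mfX_i^1(d)$.

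For the inductive step I would regard the trajectory $x^{\ell-1}(t)$ as the forcing $y(t)$ acting on layer $\ell$. First I would secure global existence of $x_i^\ell$: since $x^{\ell-1}$ is continuous and, by the inductive hypothesis, eventually lies in a fixed compact set $\mathcal{K}^{\ell-1}$, it is bounded on $[0,\infty)$, and $q$-polynomial decay on an enclosing compact set rules out blow-up. Convergence from infinity in finite time then propagates: for $t$ past the inductive time $T^{\ell-1}(d)$ at which $x^{\ell-1}$ has entered $\mathcal{K}^{\ell-1}$, the bound $f_i^\ell(y,x) \le -c x^q$ holds uniformly for $y \in \mathcal{K}^{\ell-1}$, and the same comparison argument drives $x_i^\ell$ into a compact set within an additional finite time independent of the initial condition, yielding $T^\ell(d) = T^{\ell-1}(d) + T_0^\ell$. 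Exponential reliability propagates through the exponentially feed-forward property: the inductive hypothesis gives $x^{\ell-1}(t) \xrightarrow{\text{exp}} \mfX^{\ell-1}(d)$, hence $x_i^\ell(t) \xrightarrow{\text{exp}} \widehat x_i^\ell$, and the constant-forcing comparison with reliability again gives $\widehat x_i^\ell = \mfX_i^\ell(\mfX^{\ell-1}(d))$. Continuous differentiability of $\mfX^\ell$ then follows by composing the continuously differentiable maps $\mfX_i^\ell$ and $\mfX^{\ell-1}$; taking $r = m$, with the compact set for the full system given by the product of the per-layer sets and the time by $T^m(d)$, completes the proof.

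I expect the main obstacle to be the bookkeeping around the forcing during the transient window $[0,T^{\ell-1}]$, where $x^{\ell-1}(t)$ need not lie in the compact set on which $q$-polynomial decay is stated: one must separately guarantee global existence on this finite interval (using a larger enclosing compact set) before the decay estimate can be applied, and one must verify that the eventual compact set and all the relevant times can be chosen uniformly over initial conditions, as the definitions in Definition \ref{def:786766} demand. A secondary subtlety is reconciling the precise contraction form $\abs{x(t) - \mfX(y)} \le \abs{x(0) - \mfX(y)} e^{-\lambda(y) t}$ with the weaker $c\,e^{-\etaa t}$ decay supplied by the exponentially feed-forward hypothesis; the componentwise propagation of exponential decay certainly yields exponential convergence, but packaging the accumulated rates and prefactors across the stacked layers into that single-exponential bound requires some care.
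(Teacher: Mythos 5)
Your proposal is correct and takes essentially the same route as the paper: induction on the subsystems $\mathcal{F}^r$, with the base case handled by the constant forcing $x^0\equiv d$ and the inductive step combining $q$-polynomial decay (convergence from infinity in finite time), the exponentially feed-forward hypothesis (propagation of exponential convergence), and reliability (identification of the limit with the smooth fixed-point map). Your write-up is in fact more detailed than the paper's own brief proof, which does not spell out the transient-window/global-existence bookkeeping or the reconciliation of the generic exponential decay bound with the precise contraction form required by Definition~\ref{def:786766} that you flag as subtleties.
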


\begin{proof}
The proof proceeds by induction on $r$ for the systems $\mathcal F^r$,  where we remind the reader that the systems $\mathcal F^r$ are defined below \eqref{eq:5456897}--\eqref{eq:870877807}.  Consider the case $\ell =1$, where we have
\[
    \frac{d}{dt} x^1_i(t) = f_i^1(x^0,x^1_i(t)), \qquad \text{for $i \in \{1,\dots, c_1\}$}.
\]
Here, reliability of $x^1_i$ follows by our assumption.  The convergence of $x^1_i$ from infinity in finite time follows by the assumption of $q$-polynomial decay (compare with $\dot u = -c u^q$).  Finally, the exponential reliability of $x^1_i$ follows from the exponential feed-forward assumption (here $x^0 \xrightarrow{\text{exp}} x^0$ trivially).  Hence, the system $\mathcal F^1$   satisfies all the desired properties.

Now suppose the result holds for $\mathcal{F}^r$ with $r < m.$  Then  there is a compact set $\mathcal{K} \subset \R^{c_r}_{\ge 0}$ and a time $T>0$ so that $x^r(t) \in \mathcal{K}$ for all $t\ge T$, and moreover $x^r \xrightarrow{\text{exp}} \widehat x^r$.  Hence, by the assumption of $q$-polynomial decay, $x^{r+1}(t)$ converges from infinity in finite time, and we may conclude that the system $\mathcal{F}^{r+1}$ does as well.  Finally, by the exponential feed-forward assumption on layer $r+1$, together with the assumption that $\dot x_i = f_i^{r+1}(y,x_i(t))$ is reliable,  we may conclude that $\mathcal{F}^{r+1}$ is  exponentially reliable, and the proof is complete.
\end{proof}

We return to the activation system presented in Example \ref{example:ReLU}.
\begin{prop}\label{prop:our_network_NEW}
Consider the hardwired  neural network $(G,\PP,\varphi)$ and the system \eqref{eq:5456897}--\eqref{eq:870877807}  
with
\[
    f_i^\ell(x^{\ell-1},x^\ell_i) =  \etaa + \rho_i^\ell(x^{\ell-1}) x_i^\ell  - (x_i^\ell)^2,
\]
where
\[
    \rho_i^\ell (x^{\ell-1}) = \left(W^\ell x^{\ell-1} + \beta^\ell\right)_i = \sum_{j=1}^{c_{\ell-1}} W^\ell_{ij}x^{\ell-1}_j + \beta^\ell_i,
\]
and $\etaa >0.$ 
This system implements, in the sense of Definition \ref{def:implement}, the hardwired feed-forward neural network $(G,\PP,\varphi)$ where $\varphi$ is given as the smoothed ReLU function \eqref{eq:modifiedReLU}.  Moreover, the system  converges from infinity in finite time and is exponentially reliable.
\end{prop}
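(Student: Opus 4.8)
The plan is to verify the three hypotheses of Theorem~\ref{thm:main} for the scalar activation system and to identify its stable fixed point with the smoothed ReLU; both asserted properties then follow at once. Fix $\ell$ and $i$ and freeze $y = x^{\ell-1} \in \R^{c_{\ell-1}}_{\ge 0}$ as a parameter, so that $\rho := \rho_i^\ell(y) = (W^\ell y + \beta^\ell)_i$ is a real scalar and the one-dimensional system reads $\dot x = \etaa + \rho x - x^2$. The right-hand side is a downward parabola in $x$ with roots $x_\pm = \tfrac12(\rho \pm \sqrt{\rho^2 + 4\etaa})$; since $\etaa > 0$ the discriminant is strictly positive and $x_- < 0 < x_+$. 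Hence on $\R_{\ge 0}$ the field points toward the unique positive root $x_+$, which is globally attracting there, and one reads off
\[
    x_+ = \tfrac12\big(\rho + \sqrt{\rho^2 + 4\etaa}\big) = \varphi_3(\rho).
\]
Setting $\mfX(y) = \varphi_3(\rho_i^\ell(y))$ gives a function that is strictly positive (because $\etaa>0$) and, since $\rho_i^\ell$ is affine and $\rho^2 + 4\etaa \ge 4\etaa > 0$, is smooth in $y$; as the polynomial right-hand side is locally Lipschitz and solutions stay bounded, the scalar system is reliable in the sense of Definition~\ref{def:786766}, with attractor $\mfX(y)$.

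Next I would check the remaining two hypotheses. For $q$-polynomial decay, fix a compact $\mathcal{K}$ and set $R = \sup_{y \in \mathcal{K}}|\rho_i^\ell(y)| < \infty$; then for $x > M$ with $M$ large, $\etaa + \rho x - x^2 \le \etaa + Rx - x^2 \le -\tfrac12 x^2$, so the system has $q$-polynomial decay with $q = 2 > 1$. The exponentially feed-forward property is the crux. Given a forcing with $y(t)\xrightarrow{\text{exp}}\widehat y$, affineness of $\rho_i^\ell$ yields $\rho(t) := \rho_i^\ell(y(t)) = \widehat\rho + \ep(t)$ with $|\ep(t)| \le c e^{-\alpha t}$, and I take $\widehat x = \varphi_3(\widehat\rho)$. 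A first comparison step shows $x(t) \to \widehat x$: assuming $x(t)$ exists for all $t$ it stays in some $[0,B]$ (the field is negative for large $x$), and for any small $\delta>0$ the bound $|\rho(t)-\widehat\rho| \le \delta$ for large $t$, together with $x \ge 0$, sandwiches $\dot x$ between the two autonomous fields $\etaa + (\widehat\rho \pm \delta)x - x^2$, whose attractors $\varphi_3(\widehat\rho \pm \delta)$ converge to $\widehat x$ as $\delta \to 0$.

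To upgrade this to an exponential rate, set $u = x - \widehat x$. Using $\etaa + \widehat\rho\widehat x - \widehat x^2 = 0$ and $\widehat\rho - 2\widehat x = -\sqrt{\widehat\rho^2 + 4\etaa} =: -\mu < 0$, one computes
\[
    \dot u = -\mu u - u^2 + \ep(t)\,x(t).
\]
Since $x(t)$ is bounded, the last term obeys $|\ep(t)x(t)| \le K e^{-\alpha t}$. As $u(t) \to 0$, choose $T$ with $|u(t)| \le \mu/2$ for $t \ge T$; then $|u^3| \le \tfrac{\mu}{2}u^2$, and for $w = |u|$ one obtains the differential inequality $\dot w \le -\tfrac{\mu}{2}w + K e^{-\alpha t}$ on $[T,\infty)$, whose integration gives $w(t) \le C e^{-\nu t}$ for some $\nu>0$, i.e. $x(t)\xrightarrow{\text{exp}}\widehat x$. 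With all three hypotheses in hand, Theorem~\ref{thm:main} yields that the full system converges from infinity in finite time and is exponentially reliable. That it implements $(G,\PP,\varphi_3)$ follows by induction on $\ell$: with $x^0 \equiv d$, reliability and the (trivial) exponentially feed-forward property give $x^1(t)\xrightarrow{\text{exp}}\varphi_3(W^1 d + \beta^1) = a^1(d)$, and if $x^{\ell-1}(t)\xrightarrow{\text{exp}} a^{\ell-1}(d)$ then the forcing converges exponentially, so the exponentially feed-forward property forces $x^\ell(t) \to \varphi_3(W^\ell a^{\ell-1}(d) + \beta^\ell) = a^\ell(d)$, matching Definition~\ref{def:implement}.

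The main obstacle is the exponentially feed-forward verification, specifically the two-stage argument that first pins down the limit via an asymptotically-autonomous comparison and then bootstraps to an exponential rate through the Lyapunov-type inequality for $|x - \widehat x|$; the fixed-point identification and the $q$-polynomial decay bound are routine by comparison.
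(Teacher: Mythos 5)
Your proposal is correct, and its skeleton matches the paper's proof exactly: identify the roots $x^\pm$ of the quadratic field, note $x^-<0<x^+$ (this is where $\etaa>0$ enters) so that $x^+=\varphi_3(\rho_i^\ell(y))$ is the smooth, positive attractor, check $2$-polynomial decay, verify the exponentially feed-forward property, and invoke Theorem~\ref{thm:main}. The one substantive difference is \emph{how} the exponentially feed-forward property is proved. The paper does it in a single global step: with $V(x)=\tfrac12(x-x^+)^2$ and the factorization $\etaa+\rho_i^\ell(\widehat y)x-x^2=-(x-x^-)(x-x^+)$, the fact that $x(t)\ge 0$ while $x^-<0$ gives $x(t)-x^-\ge|x^-|>0$ for \emph{all} times, hence one global differential inequality $\tfrac{d}{dt}V(x(t))\le a(t)-MV(x(t))$ with $M=2|x^-|$ and $a(t)$ exponentially decaying, and a single application of Gr\"onwall's inequality (Lemma~\ref{thm:gronwall}) finishes. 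You instead argue in two stages: first a comparison/sandwich with the autonomous fields $\etaa+(\widehat\rho\pm\delta)x-x^2$ to obtain the qualitative limit $x(t)\to\widehat x$, and then, once $|u|=|x-\widehat x|\le\mu/2$, the perturbed linearization $\dot u=-\mu u-u^2+\ep(t)x(t)$ with $\mu=\sqrt{\widehat\rho^{\,2}+4\etaa}$, integrated to extract the rate. Both are valid; the paper's version buys brevity (no preliminary convergence step is needed, because the dissipation constant $|x^-|$ works on all of $\R_{\ge0}$, not just near the fixed point), while yours is more portable, since it uses only the negativity of the linearization $\widehat\rho-2\widehat x=-\mu$ at the fixed point and would transfer essentially verbatim to activation systems (such as the $q=3$ construction in Section~\ref{sec:example}) where no such clean global factorization is available. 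Two cosmetic remarks: your inequality ``$|u^3|\le\tfrac{\mu}{2}u^2$'' belongs to the $V=u^2/2$ bookkeeping, whereas for $w=|u|$ the intermediate step is $\dot w\le-\mu w+w|u|+Ke^{-\alpha t}$; either way one lands on $\dot w\le-\tfrac{\mu}{2}w+Ke^{-\alpha t}$, so nothing breaks. Also, your explicit induction showing that the limiting values equal $a^\ell(d)$ (the ``implements'' clause of Definition~\ref{def:implement}) is left implicit in the paper after Theorem~\ref{thm:main} is applied, so spelling it out is a small improvement rather than a divergence.
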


\begin{proof}
The fixed points of $\dot x = f_i^\ell(z,x(t))$ are
\[
    \frac{\rho_i^\ell(z) \pm \sqrt{ \rho_i^\ell(z)^2 + 4\etaa}}{2},
\]
which satisfy
\[
    \frac{\rho_i^\ell(z) - \sqrt{ \rho_i^\ell(z)^2 + 4\etaa}}{2} < 0 < \frac{\rho_i^\ell(z) + \sqrt{ \rho_i^\ell(z)^2 + 4\etaa}}{2}.
\]
Note that the strict inequalities follow from $\etaa >0.$
The positive equilibrium is continuously differentiable in the argument $z$.  Moreover,  for any $x(0) \in \R_{\ge 0}$, asymptotic stability follows from standard methods.  Hence, each of $\dot x = f_i^\ell(z,x(t))$ is reliable.

For each $\ell$ and $i$, the system $\dot x(t) = f_i^\ell(y(t),x(t))$ has 2-polynomial decay. Hence, to apply Theorem \ref{thm:main} and complete the proof we simply need to show that  $\dot x(t) = f_i^\ell(y(t),x(t))$ is exponentially feed-forward.

Thus, consider $\dot x(t) = f_i^\ell(y(t),x(t))$ and suppose that $y(t)\xrightarrow{\text{exp}} \widehat y$.  Denote
\[
    x^+ = \frac{\rho_i^\ell(\widehat y) + \sqrt{ \rho_i^\ell(\widehat y)^2 + 4\etaa}}{2} \quad \text{ and } \quad 
    x^- = \frac{\rho_i^\ell(\widehat y) - \sqrt{ \rho_i^\ell(\widehat y)^2 + 4\etaa}}{2}
\]
and let 
\[
    V(x) = \frac12 (x- x^+)^2.
\]
Then,  by adding and subtracting appropriately,
\begin{align*}
    \frac{d}{dt} V(x(t)) &= (x(t)-x^+) (\etaa + \rho_i^\ell(y(t)) x(t) - x(t)^2)  \\
    &= (x(t)-x^+) (\etaa + \rho_i^\ell(\widehat y) x(t) - x(t)^2) +  (x(t)-x^+) (\rho_i^\ell(y(t)) -  \rho_i^\ell(\widehat y) ) x(t)  \\
    &= - (x(t) - x^-)(x(t) - x^+)^2 + (x(t)-x^+) (\rho_i^\ell(y(t)) -  \rho_i^\ell(\widehat y) ) x(t).
\end{align*}
By assumption $y(t)\xrightarrow{\text{exp}} \widehat y$, and so by linearity we have that $\rho_i^\ell(y(t)) \xrightarrow{\text{exp}} \rho_i^\ell(\widehat y)$.  Moreover, standard methods can be used to show that $x(t)$ is uniformly bounded in time.  Combining the above allows us to conclude that
\[
    \frac{d}{dt} V(x(t)) \le a(t) - M V(x(t)),
\]
where $0 \le a(t) \le c e^{-\etaa t}$ for some $c, \etaa >0$.  Hence, by Gronwall's inequality, see Appendix \ref{appendix},
\begin{align*}
    \frac{1}{2} (x(t) - x^+)^2 & = V(x(t) \le \frac12 (x(t) - x(0))^2 e^{-Mt} +c  \int_0^t e^{-M(t-s)}e^{-\etaa s}ds\\
    &= \frac12 (x(t) - x(0))^2 e^{-Mt} + \frac{c}{M-\etaa} \left( e^{-\etaa t} - e^{-Mt}\right),
\end{align*}
where we can select $\etaa \ne M$ by taking $\etaa$ slightly smaller if need be.  Taking square roots shows that $x(t)\xrightarrow{\text{exp}} x^+$ as desired.
\end{proof}

\section{Reaction network implementation of a hard-wired neural network with a smoothed ReLU activation function}
\label{sec:implement}

This section is split into two parts.  In Section \ref{sec:preliminaries}, we give some preliminary definitions and concepts.  In Section \ref{sec:construction}, we give the explicit construction.

\subsection{Preliminaries}
\label{sec:preliminaries}

Consider a reaction network $\G = (\S,\C,\Re)$. It is convenient to separate the species set $\S$ into a disjoint union of dynamic and enzymatic species. 
\begin{definition}
    $X_i \in \S$ is said to be an \emph{enzymatic species} if $(\zeta_{\com \to \widehat \com})_i = 0$ for all $\com \to \widehat \com \in \Re$.  A species is said to be a \emph{dynamic species} if it is not an enzymatic species.\hfill $\triangle$
\end{definition}
Thus, an enzymatic species is one whose concentration is fixed for all time to its initial value, regardless of the initial value of the system. Enzymatic species are referred to as such because they facilitate reactions to occur, just like biological enzymes; higher availability of enzymes results in a proportional speedup of reactions. We will use the notation $\S_{dyn}$ and $\S_{enz}$ for the set of dynamic species and enzymatic species, respectively, and since any species can only be one or the other, $\S = \S_{dyn} ~\dot \bigcup~ \S_{enz}$.


\begin{example} \label{ex:enzy1}
Consider the reaction network
\begin{align}  
    X + Y + E &\xrightarrow{k_1} 2Y + E \nonumber \\
    Y + F &\xrightarrow{k_2} X + F. \nonumber
\end{align}
Here $\S_{dyn}=\{X,Y\}$ and $\S_{enz}=\{E,F\}$. \hfill $\square$
\end{example}

The concentrations of enzymatic species are time-invariant by definition, and so they satisfy the trivial ODE $de/dt=0$, where $e$ refers to the concentration of some enzyme $E$. This ODE obviously has the solution $e(t) = e(0)$ for all $t \ge 0$, independent of the dynamics of the other variables, and so it is without any loss of information, that we can withhold the ODEs for the enzymes from our description. We simply regard the initial values of the enzymes as parameters in the dynamical system. Thus we would say that {\em the} parametrized mass-action dynamical system associated to the network in Example \ref{ex:enzy1} is 
\begin{align}
\dot x &= -k_1exy + k_2f y \nonumber\\
\dot y &=\phantom{-} k_1exy - k_2f y, \label{eq:dynenzy1}
\end{align}
where we regard $e$ and $f$ as positive parameters similar to $k_1$ and $k_2$. 

An alternative approach is to remove all enzymatic species and to ``absorb'' their time-invariant concentration into the rate constant of the reaction. For instance, the network in Example \ref{ex:enzy1} is dynamically equivalent to the following network 
\begin{align}  
    X + Y &\xrightarrow{k_1 e} 2Y  \nonumber \\
    Y &\xrightarrow{k_2f} X , \nonumber
\end{align}
in the sense that both give rise to an identical system of differential equations \eqref{eq:dynenzy1}. 

Even though the former construction, in which enzymatic species are included in the model description may seem superfluous, it offers flexibility that will be found to be useful later when we construct reaction networks modularly, and then take unions of them.  In these situations species that were once enzymatic for one of the subnetworks can be dynamic for the resulting larger network.
This perspective will also be useful in later work when we change our outlook from a reaction network implementation of a hardwired neural network to a neural network capable of learning. For a preview, suppose that we add a reaction to the network in Example \ref{ex:enzy1}, so the resulting network is
\begin{align}  
    X + Y + E &\xrightarrow{k_1} 2Y + E \nonumber \\
    Y + F &\xrightarrow{k_2} X + F \nonumber \\
    Z + F &\xrightarrow{k_3} Z. \label{eq:bignet}
\end{align}
Then $F$ has lost its status as an enzyme and has been moved to the set of dynamic species. The  species partition for the new network is $\S_{dyn} = \{X, Y, F\}$ and $\S_{enz} = \{ E, Z\}$. Addition of the reaction $Z + F \to Z$ allows us to modulate the concentration of $F$ and therefore also the rate at which the reaction $Y+F \to X+F$ occurs.

The example is illustrative of some general properties, which we now state. 
A {\em subnetwork} of a reaction network $\G = (\S,\C,\Re)$ is a reaction network $\G' = (\S',\C',\Re')$ such that $\Re' \subseteq \Re$. It necessarily follows that $\S' \subseteq \S$, since every species in $\S'$ must participate in some reaction in $\Re'$ and therefore also in $\Re$, and by similar reasoning $\C' \subseteq \C$. 
While $\S_{dyn}' \subseteq \S_{dyn}$, the containment for enzymes runs backwards, i.e. $\S_{enz} \cap \S' \subseteq \S_{enz}'$. 
For example, the reaction network in Example \ref{ex:enzy1} is a subnetwork of the reaction network \eqref{eq:bignet}. The above-mentioned containments are easily checked to hold for this particular example. 

With the assumption of mass-action kinetics, and for any particular choice of reaction rate constants, a reaction network can be translated into a system of ODEs via \eqref{eq:55550}. Given this mapping, it is natural to say that a dynamical property of the parametrized ODE system is a property of the underlying reaction network itself.   We proceed by fixing some notation, which will allow us to translate Definition \ref{def:786766} to the reaction network setting. Let $\G = (\S,\C,\Re)$ be a reaction network with $\S = \S_{dyn}~ \dot \bigcup~ \S_{enz}$, and fix some (arbitrary) ordering of the dynamic species set $\S_{dyn}$. Let $n \coloneqq \abs{\S_{dyn}}$ and $x(t) \in \R^{n}_{\ge 0}$ denote the vector of concentrations of the dynamic species with respect to the ordering. We also arbitrarily order the set of parameters, which includes the reaction rate constants and the initial concentrations of enzymes. With the ordering, the parameters can be identified with a vector in $ y \in \R^p_{> 0}$ where $p \coloneqq \abs{\Re} + \abs{\S_{enz}}$.

\begin{definition}\label{def:3245873}
Suppose that $\dot x(t) = f(y,x(t))$ with $x(t) \in \R^n_{\ge 0}$ and $y\in \R^p_{>0}$ is a parameterized dynamical system obtained by applying mass-action kinetics to $\G = (\S,\C,\Re)$ with $\S = \S_{dyn}~\dot \bigcup~ \S_{enz}$, $n \coloneqq \abs{\S_{dyn}}$, and $p \coloneqq \abs{\Re} + \abs{\S_{enz}}$. Suppose that for any choice of $x(0)\in \R^n_{\ge 0}$ and  $y\in \R^p_{> 0}$ the  system $\dot x(t) = f(y,x(t))$ has a unique solution. We say that $\G$ is {\em reliable},  {\em converges from infinity in finite time}, or \emph{exponentially reliable} if the parameterized system $\dot x(t) = f(y,x(t))$ has those respective properties according to Definition~\ref{def:786766}.   
\hfill $\triangle$
\end{definition}

\subsection{Construction}
\label{sec:construction}

\begin{figure}
\begin{equation*}
\begin{tikzpicture}[x=1.5cm, y=1.5cm, >=stealth]

  \node [every neuron/.try, neuron 1/.try, color=red ] (input-1) at (5,2-1) {};
  
      \node  [above = 0cm of input-1] {$\wt{X}'$};

  \node [every neuron/.try, neuron 1/.try,color=green ] (output-1) at (7,2-1) {};
  
 \node  [above = 0cm of output-1] {$\wt{X}$};

\draw [->] (input-1) -- (output-1);

    \node [above right=-0.25cm and 0.8cm of input-1] {$w$};
     \node [left = -0.8cm of output-1] {$b$};

\end{tikzpicture}
\end{equation*}
\caption{\small{A single edge in a neural network, with one input and one output node.}}
\label{fig:23495790}
\end{figure}
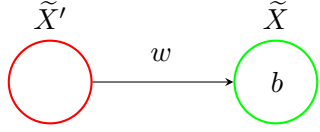

We will give the construction of a reaction network that implements a neural network with the smoothed ReLU activation function.  We will specifically design the network so that the ODE systems has $f_i^\ell$ as given in Example \ref{example:ReLU}. 

The construction will proceed in the following manner.  First, we build an explicit reaction network implementation of only a single edge, as depicted in Figure \ref{fig:23495790}, of the neural network, and describe the resulting parametrized ODE system. Second, we build on the previous step by giving a reaction network implementation of a single fixed node $\wt{X}$ in the neural network along with all of its inputs, and again describe the resulting parameterized ODE system.  Finally, we describe the reaction network implementation of the entire neural network, which results in the parametrized ODE system in~\eqref{eq:5456897}--\eqref{eq:870877807}.  For the sake of readability, we will limit the amount of enumeration utilized in our construction.

\begin{table}
\renewcommand{\arraystretch}{1.25}
  \centering
   \begin{adjustbox}{max width=\textwidth}
 \begin{tabular}{|c|c|c|}
 \hline
\specialcell{Which aspect of neural network \\ is implemented chemically?} & \specialcell{Chemical implementation of a \\ single directed edge $(\wt{X}' , \wt{X})$ \\ of the neural network}  & \specialcell{Which term results in the \\ ODE for the species $X$?} \\ 
 \hline
 \hline
\specialcell{Closeness to \\ ReLU} & $H \longrightarrow H + X$ &  $\etaa$ \\
 \hline
 \specialcell{Input $\wt{X}'$ and weight \\ of the edge $(\wt{X}',\wt{X})$} & \specialcell{$X' + W^+ + X \longrightarrow X' + W^+ + 2X $ \\ $X' + W^- + X \longrightarrow X' + W^- $} & \specialcell{$(w^+ - w^-)x' x$, \\ where $w \coloneqq w^+ - w^-$ \\ implements the edge weight} \\
   \hline
 \specialcell{Additive node bias \\ of $\wt{X}$} &    \specialcell{$ B^+ + X \longrightarrow  B^+ + 2X $ \\ $B^- + X \longrightarrow B^- $} & \specialcell{$(b^+ - b^-)x$, \\ where $b \coloneqq b^+ - b^-$ \\ implements the node bias}\\
     \hline
  \specialcell{$q$-polynomial decay \\ Stability/convergence from $\infty$} &   \specialcell{$qX \longrightarrow X $ \\ $(q > 1)$} & $(q-1)x^q$  \\
\hline
 \end{tabular}
 \end{adjustbox}
 \vspace{0.2cm}
 \caption{Components of an elementary reaction network -- chemical implementation of a single directed edge $(\wt{X}' , \wt{X})$ along with nodes $\wt{X}'$ and $\wt{X}$ of the neural network. A neural network is naturally viewed as a disjoint union of its edges, which allows putting together a chemical implementation as an appropriate union of elementary reaction networks.}
 \label{table:elementaryrxnnet}
 \end{table}

\vspace{.1in}

\noindent \textbf{Step 1.}  
The first step of the process, producing the reactions necessary for the implementation of a single edge, is carried out in Table \ref{table:elementaryrxnnet}. 
The species sets for this particular reaction network are $\S_{dyn} = \{X\}$, and $\S_{enz}=\{H, W^+, W^-, B^+, B^-, X'\}$. 
The associated mass-action ODE system is one-dimensional, in the variable $x$, and if we assume all reactions occur with a rate constant of 1, is
\begin{align} \label{eq:nxx'ode}
   \frac{d}{dt} x(t) = \etaa + \left(\left(b^+ - b^-\right) + (w^+ - w^-) x'\right) x(t) - (q-1)x(t)^q. 
\end{align}
We will assume from here on that $q=2$ and note that it is easy to make the necessary changes in the description for a general value different from $2$.  Note that when $q \ne 2$ the resulting activation function will be different than the smoothed ReLU.

\vspace{.1in}

\noindent \textbf{Step 2.}  
For the second step, we implement via reaction network the neural network depicted in Figure \ref{fig:34579988}, which now simply consists of $\wt{X}$ along with all its inputs.  For this particular node, we assume there are $c > 0$ inputs.  The construction proceeds by simply taking the union over the $c$ edges $(\wt X_i',\wt X)$ of the reaction networks described in Step 1.  After this union, we once again have that $X$ is the only dynamic species and the mass-action ODE for its concentration is given by
\begin{align}
       \frac{d}{dt} x(t) & = \etaa + \left(\left(b^+ - b^-\right) + \sum_{i=1}^c (w_{x'_i,x}^+ - w_{x'_i,x}^-) x'_i \right) x(t) - x(t)^2 = \etaa + \rho_x x(t) - x(t)^2,  \label{eq:dyn2}
\end{align}
where $\rho_x$ is defined by the equation above (and is analogous to $\rho_i^\ell$ from \eqref{eq:rho}).
Note that the above corresponds with the equations in Example \ref{example:ReLU}.
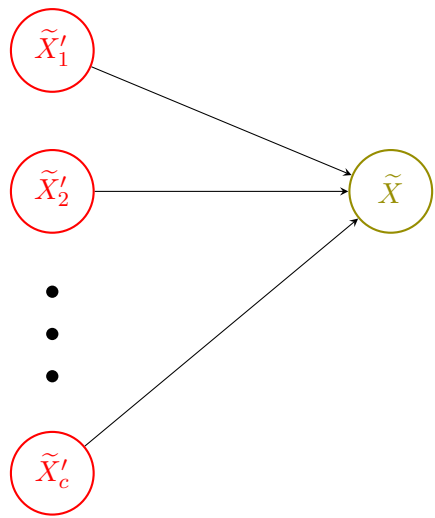
\begin{figure}
\begin{equation*}
\begin{tikzpicture}[x=1.5cm, y=1.5cm, >=stealth]

\foreach \m [count=\y] in {1,2,missing,3}
  \node [every neuron/.try, neuron \m/.try, color=red ] (input-\m) at (5,3.5-\y*1.25) {};


\foreach \l [count=\i] in {1,2,c}
  \node [above=-0.9cm] at (input-\i.north) {{\cre $\wt X'_{\l}$}};
  

  \node [every neuron/.try, neuron 1/.try,color=olive ] (output-1) at (8,2-1) {};

 \node  [above=-0.85cm of output-1] {{\color{olive} $\wt{X}$}};

    \foreach \i in {1,2,...,3}
    \draw [->] (input-\i) -- (output-1);
    


\end{tikzpicture}
\end{equation*}
\caption{Step two: node $\wt{X}$ along with all its $c$ inputs.}
\label{fig:34579988}
\end{figure}

\vspace{.1in}

\noindent \textbf{Step 3.}
The third step is to construct the final network by taking the union of the  construction described in the second step over all non-input nodes $\wt{X}$.  In terms of dynamical systems, this constitutes taking a union of the systems of ODEs given by \eqref{eq:dyn2},
with the appropriate indices applied to the variables and parameters. The final system of equations appears in \eqref{eq:5456897}--\eqref{eq:870877807} and in Example \ref{example:ReLU}. The entire system is repeated here for convenience of the reader. 
\begin{align*}
    x_i^0 &= d_i, \hspace{1in} \text{ for some fixed } d \in \R^{c_0}_{ \ge 0},\\
    \frac{d}{dt} x_i^\ell (t) &= \etaa + \left(\sum_{j=1}^{c_{\ell-1}} W^\ell_{ij}x^{\ell-1}_j(t) + \beta^\ell_i \right) x_i^\ell(t)  - (x_i^\ell(t))^2, \quad \text{for $\ell \in \{1, \dots, m\}$}. 
\end{align*}
Note that many species that were enzymatic in a particular network, for example the species associated with the terms $x'_i$ in the second step, are dynamic species in the final model.

\section{An example}
\label{sec:example}

In this section, we provide an example  to visually demonstrate several aspects of our theory and our constructions.  The focus of this paper was not on training a network--that will be the focus of our next work.  Instead, in this paper we focused on the different qualitative properties of possible constructions, as detailed in Definitions \ref{def:786766} and \ref{def:3245873}, and so this example will primarily share that focus.  We will  showcase how the limiting values of the ODE associated with  a reaction network that implements the modified ReLU activation function, as detailed in Section \ref{sec:implement}, match precisely with the more standard implementation of the neural network via direct use of the activation function \eqref{eq:modifiedReLU}.  Moreover, we will  demonstrate the fast convergence of the ODE, a property we have proven to hold in Proposition \ref{prop:our_network_NEW}.
Next, we will demonstrate the flexibility of the developed theory by chemically implementing a different activation function: one that grows like $\sqrt{y}$, as $y\to \infty$ (as opposed to linear growth in the case of ReLU), and converges to 0, as $y \to -\infty$.  This new implementation will still satisfy the conditions of Theorem \ref{thm:main}, and hence still enjoy the properties of Definitions \ref{def:786766} and \ref{def:3245873}. Finally, we will explain how any activation function with growth of the form $y^{1/k},$ as $y \to \infty$, for any integer $k \ge 1$, can likewise be implemented chemically. 

As it is a standard example in the field, we utilize the MNIST dataset of handwritten digits \cite{lecun1998mnist}. See Figure \ref{fig:digit} for four representative images from this dataset.  These images have $784 = 28\times 28$ pixels, and the task of the neural network is to take a grayscale image of such a hand drawn digit, and correctly identify the digit.  For example, we want the output to correctly identify the images in Figure \ref{fig:digit} as 8, 2,  6, and 7, respectively.  

\begin{figure}
    \centering
    \includegraphics[width=1.0in,height=1.0in]{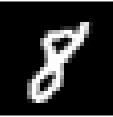}\qquad \includegraphics[width=1.0in,height=1.0in]{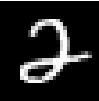}\qquad \includegraphics[width=1.0in,height=1.0in]{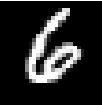}\qquad \includegraphics[width=1.0in,height=1.0in]{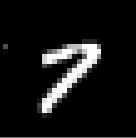}
    \caption{Representative examples of hand-drawn images from the MNIST database \cite{lecun1998mnist}.}
    \label{fig:digit}
\end{figure}

The input to the neural network can be regarded as a single vector of size 784.  Further, it is natural to choose the number of output nodes to be 10, with each node representing a different digit from the set $\{0,1,\dots,9\}$.  To complete the specification of the structure of the neural network, we will, somewhat arbitrarily, choose to have a single hidden layer with 40 nodes.  Therefore, our neural network has:
\[
c_0 = 784, \qquad c_1 = 40, \qquad c_2 = 10.
\]
As already mentioned, we will utilize the construction detailed in Section \ref{sec:construction}, yielding a smoothed ReLU \eqref{eq:modifiedReLU} as our activation function, and we will choose $h = 1$ as our smoothing parameter.

We will now clearly specify how we implemented our neural network in Matlab.  For the sake of reproducibility, we first set the seed of our random number generator by using the command ``rng(1234).''  We used this seed for every computation we are reporting in this section.  We then initialized our weights and biases randomly by utilizing scaled Gaussians via the following commands:
\begin{verbatim}
    W1 = (1/sqrt(c0))*randn(c1,c0);
    W2 = (1/sqrt(c1))*randn(c2,c1);
    beta1 = randn(c1,1);
    beta2 = randn(c2,1);
\end{verbatim}
In order to ensure that we use exactly the same random variables as does the reaction network implementation, we also defined an initial condition via the command 
\begin{verbatim}
    x00=10*rand([50,1]);
\end{verbatim}
as that call is necessary in our reaction network implementation in which each of the hidden and output nodes is a dynamic variable and therefore utilizes an initial condition.  While present in the code, this term is not used in the standard neural network implementation.

We utilized a quadratic cost function \eqref{eq:i7087778} in which the ``truth,'' denoted $\tau(d)$, was a vector with a 10 in the place of the true digit (i.e., if the digit represented by $d$ is zero, then $\tau(d)$ has a 10 in the 1st component, if the digit represented by $d$ is 1, then $\tau(d)$ has a 10 in the 2nd component, etc.), and has ones in all other components.
In order to implement gradient descent, we used a learning rate of $\eta = 0.1$ so that after each iteration of the neural network, we update our parameters via
\begin{align*}
    \beta^{\ell} &\leftarrow \beta^{\ell} - \eta \nabla_{\beta^{\ell}} \text{Cost}\\
    W^\ell_{ij} &\leftarrow W^\ell_{ij} - \eta \frac{\partial \text{Cost}}{\partial W^\ell_{ij}},
\end{align*}
for appropriate $\ell, i$, and $j$.  In order to estimate the derivatives above, we utilized stochastic gradient descent by using a batch of  300 randomly selected elements from the first 60,000 entries in the MNIST dataset.  The specific call we used in our Matlab code was
\begin{verbatim}
    Vals=randperm(60000,BatchSize);
\end{verbatim}
where BatchSize had been set to 300.  
See Figure \ref{fig:smoothedReLUPlots} for (i) the estimate of the cost function, and (ii) the number correctly predicted, out of the randomly chosen batch of 300, by the neural network over 1000 iterations of the learning process. Note that near the end of the 1000 iterations, the neural network is correctly identifying  just over 95\% of the digits.
\begin{figure}
    \centering
    \includegraphics[width=3in]{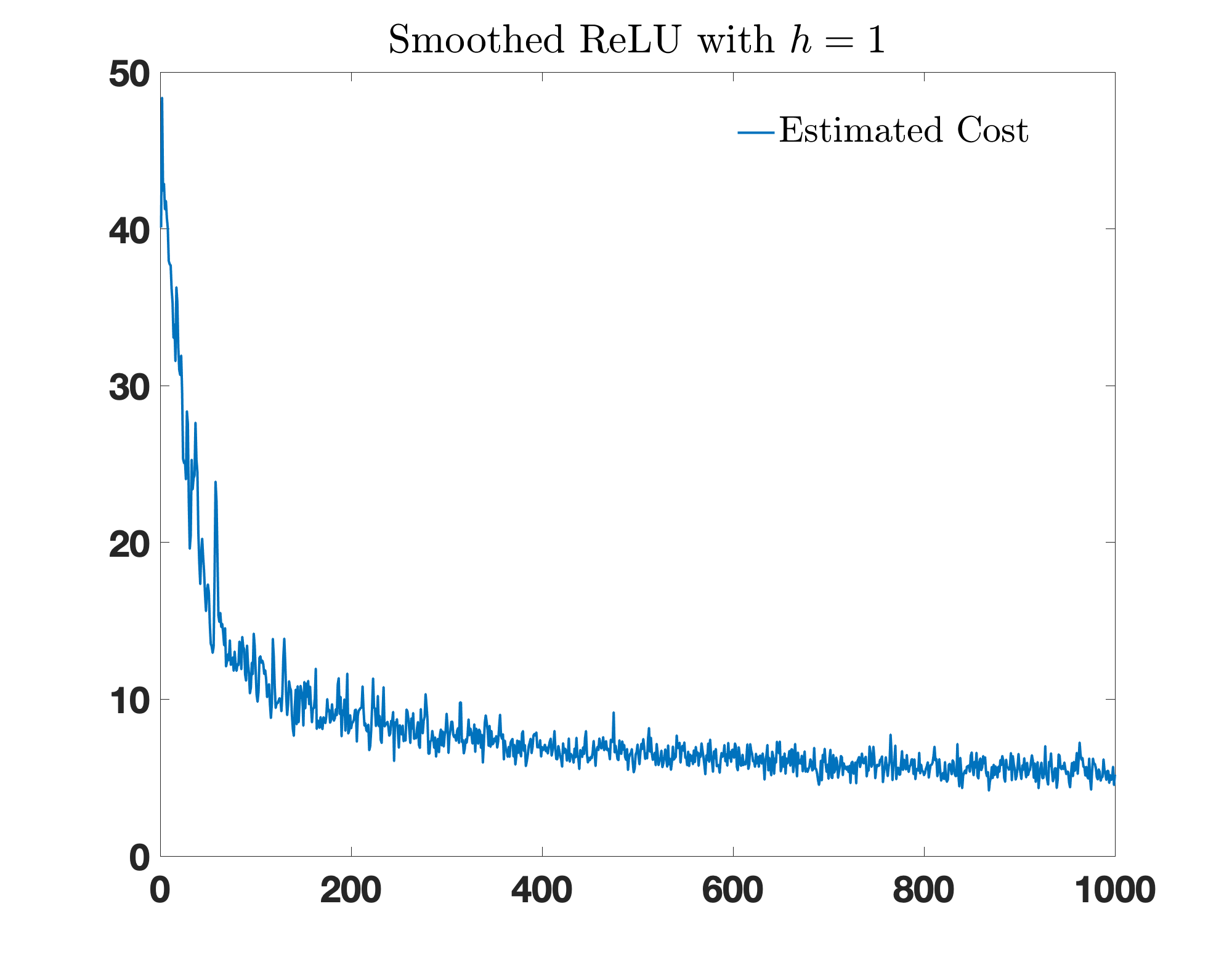}\quad \includegraphics[width=3in]{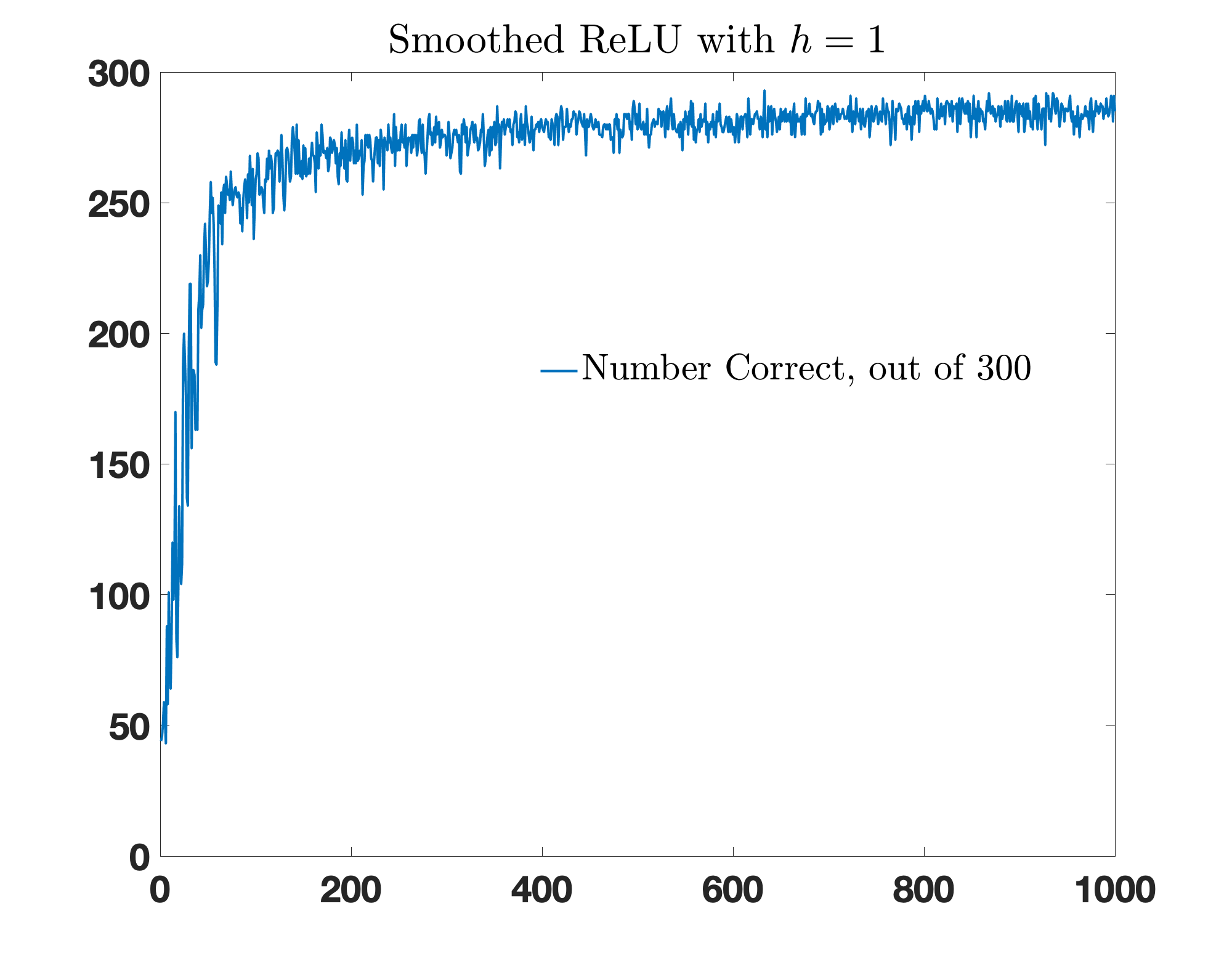}
    \caption{Performance of the smoothed ReLU cost function with $h=1$.  Left image: estimate of the cost function over each iteration of the neural network (from 300 randomly selected elements from the MINST dataset).  Right image:  total number of images from the 300 whose digits were correctly identified.  For each image the $x$-axis represents the iteration number of the learning process.}
    \label{fig:smoothedReLUPlots}
\end{figure}
For the sake of comparison, in Figure \ref{fig:ReLUPlots} we give similar plots for the standard ReLU activation function (i.e., taking $h=0$).  Now the neural network correctly identifies around 88\% of the digits.   The superiority of the smoothed version of the ReLU activation function was apparent in nearly all the seeds of the random number generator that we tried (data not shown).  The precise reason for the superiority of the smoothed version of the ReLU activation function in the present setting is unclear to us, though perhaps the lack of a zero derivative for $y < 0$ is playing a role.
\begin{figure}
    \centering
    \includegraphics[width=3in]{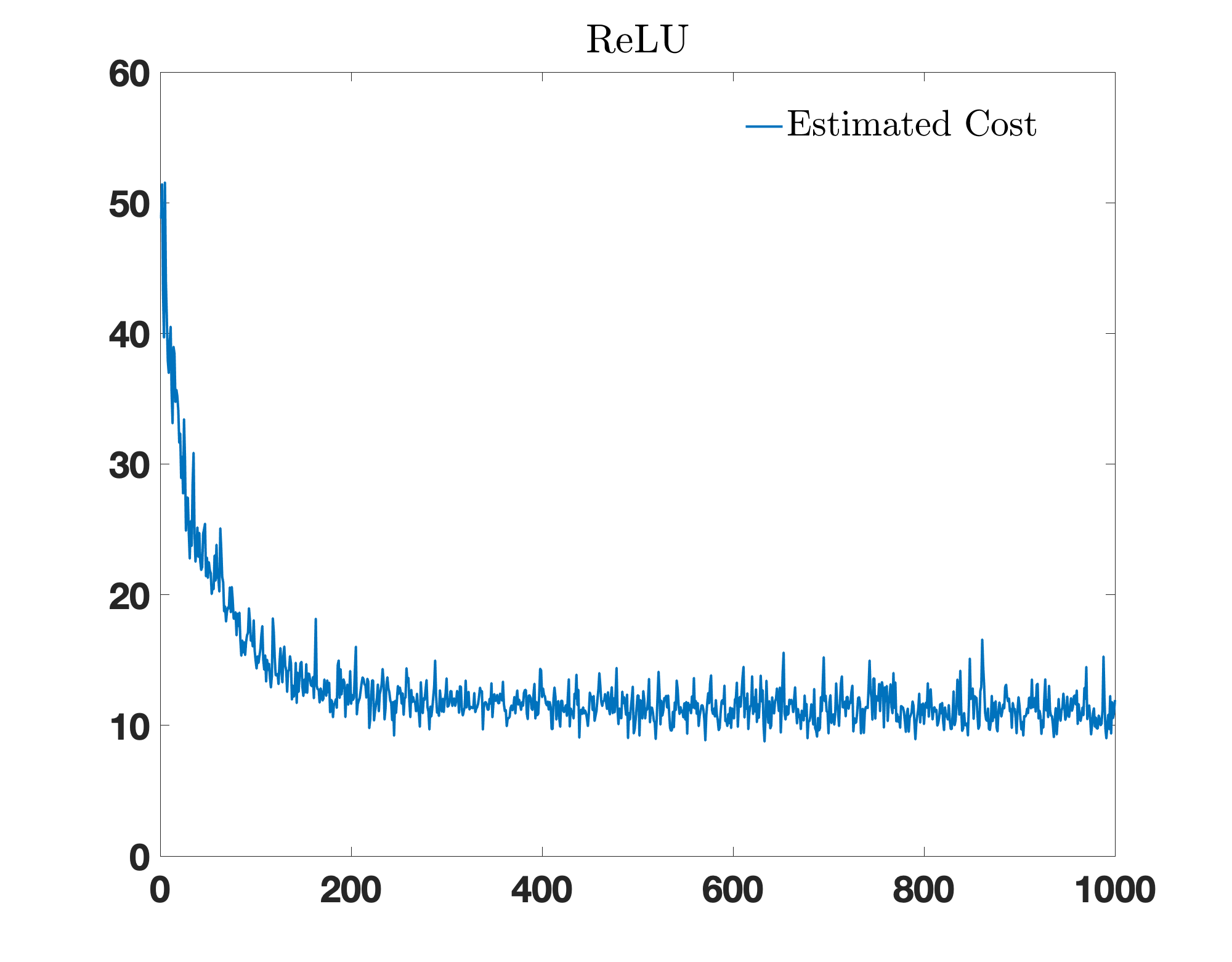}\quad \includegraphics[width=3in]{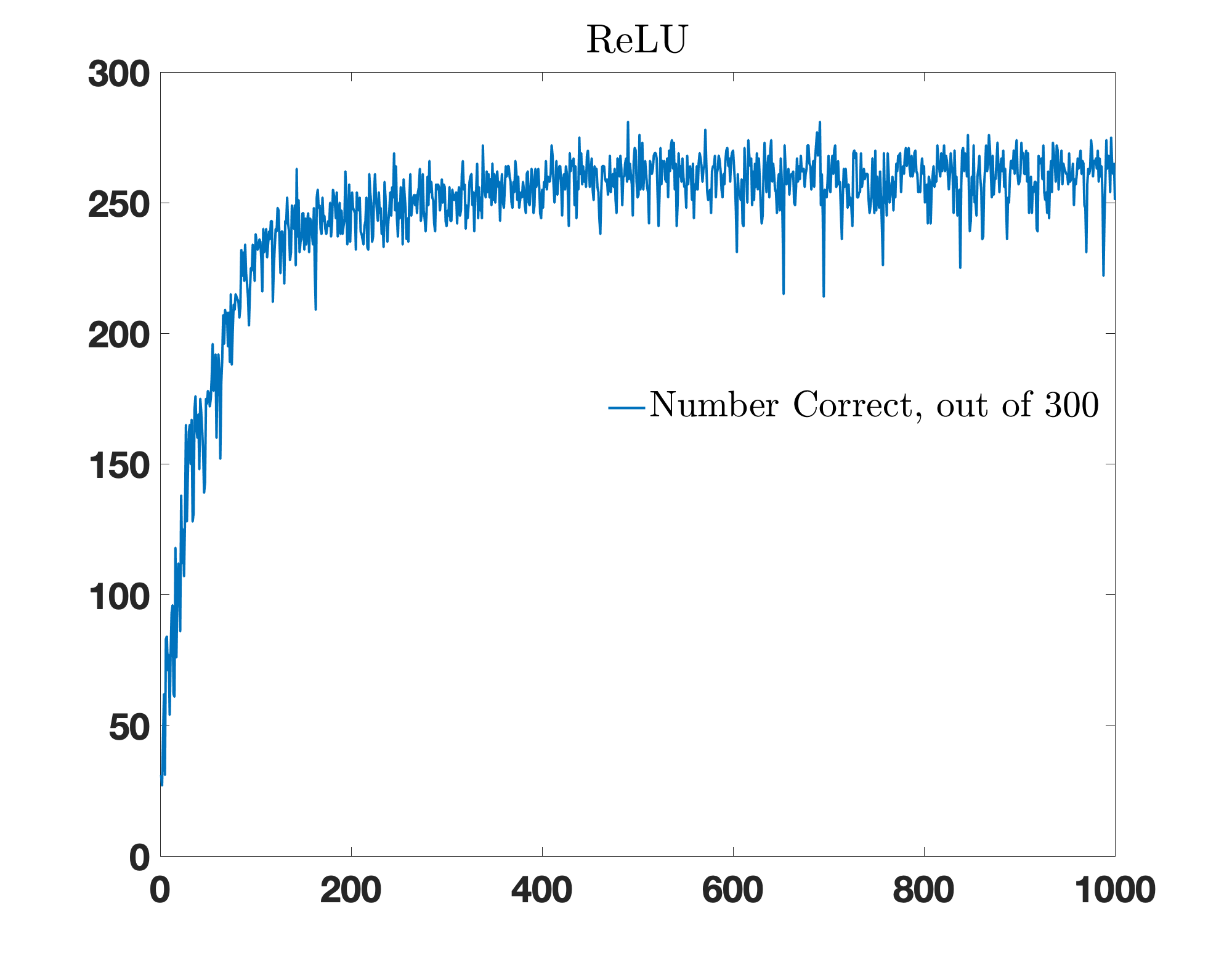}
    \caption{Performance of the  ReLU cost function (i.e., $h=0$).  Left image: estimate of the cost function over each iteration of the neural network (from 300 randomly selected elements from the MINST dataset).  Right image:  total number of images from the 300 whose digits were correctly identified.  For each image the $x$-axis represents the iteration number of the learning process. }
    \label{fig:ReLUPlots}
\end{figure}

We now demonstrate the learning of the reaction network in a different manner: by visualizing the output trajectories of a subset of the nodes on a particular image from the database, but after a different number of iterations of the learning process.  We arbitrarily chose the 30th image in the database, which is the 7 presented as the right-most image  in Figure \ref{fig:digit}.  Note that since the image is that of a 7, we hope and expect that the equilibrium value associated with the 8th output node of our system will eventually converge towards 10, whereas the values of the other output nodes will converge towards 1.  See Figure \ref{fig:changes} for trajectories of output nodes 1, 2, 6, and 8 (associated with the digits 0, 1, 5, and 7), and hidden nodes 1 and 32.  As expected, the equilibrium value associated to the 8th output node does indeed separate from the others and moves towards 10, as the number of iterations increases, whereas the other output nodes remain near the value 1.  Also of interest is that the equilibrium value associated with output node 2, which is associated with the digit 1, converges towards 1 slower than do the other output nodes.  We assume this is because the digit, which is a seven, has characteristics similar to the digit 1.  For the purposes of this particular calculation, our initial condition for all 50 nodes was chosen to be equal to one.

\begin{figure}
    \centering
    \includegraphics[width=3.0in]{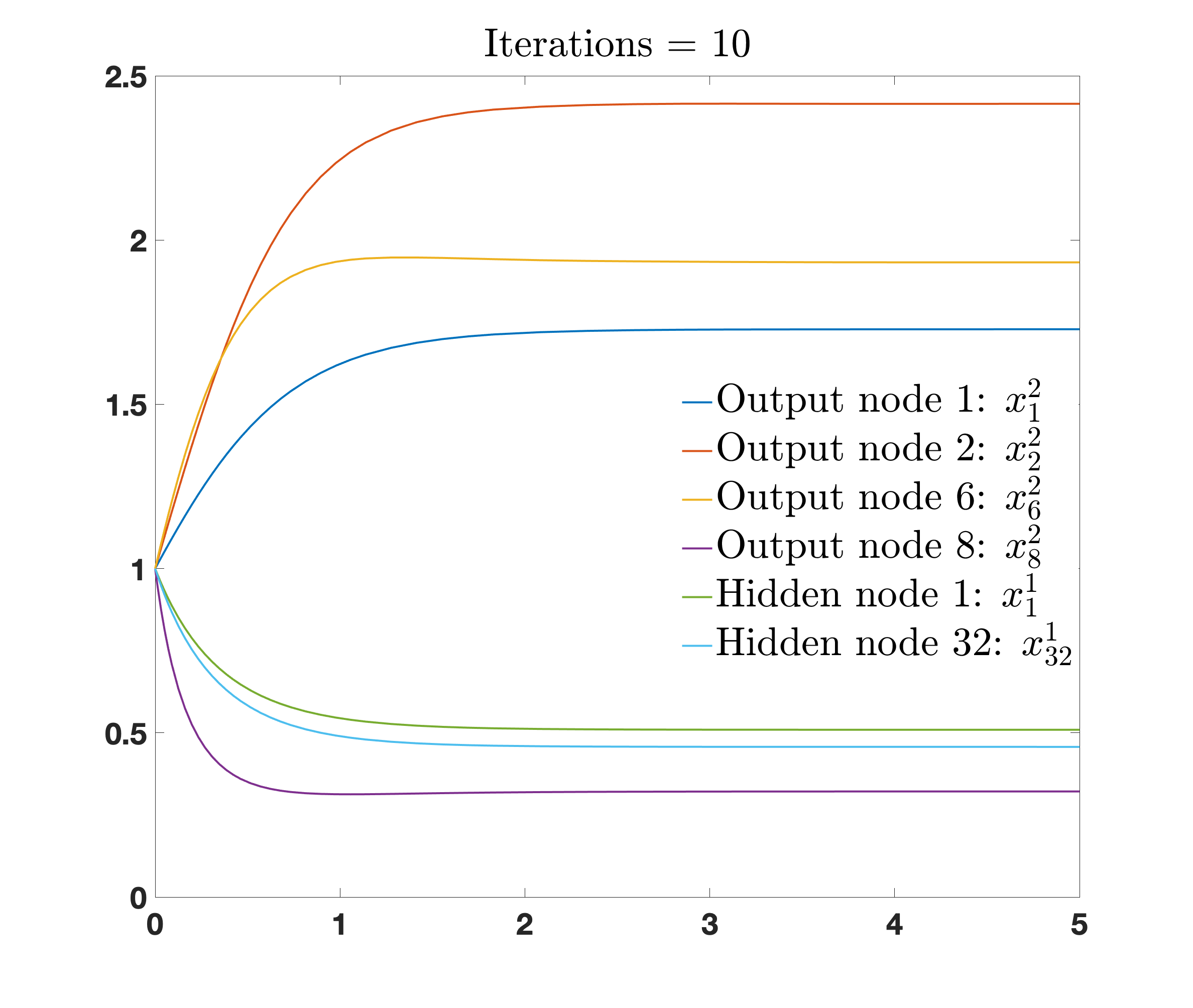}\quad \includegraphics[width=3.0in]{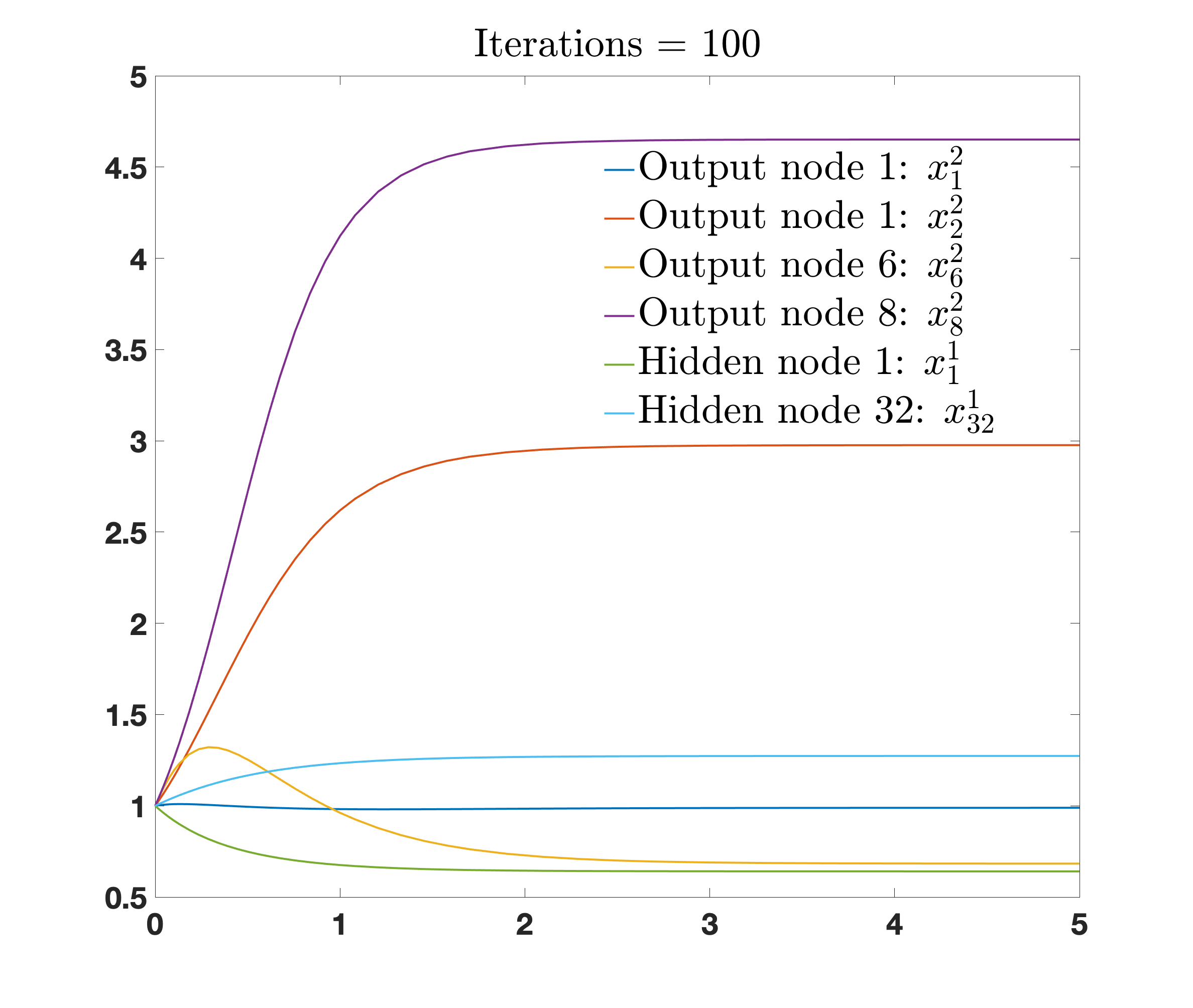}\\
    \includegraphics[width=3.0in]{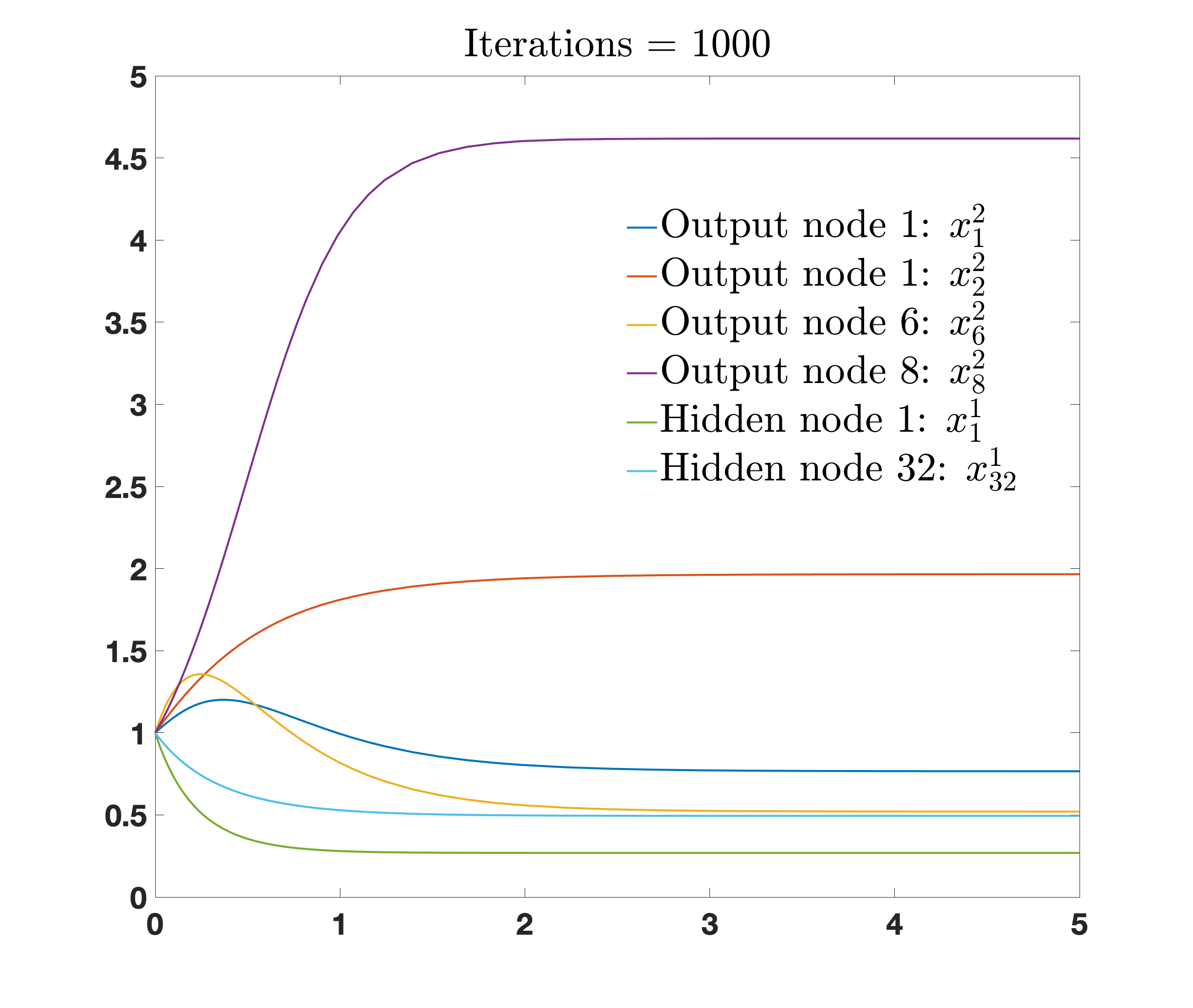}\quad \includegraphics[width=3.0in]{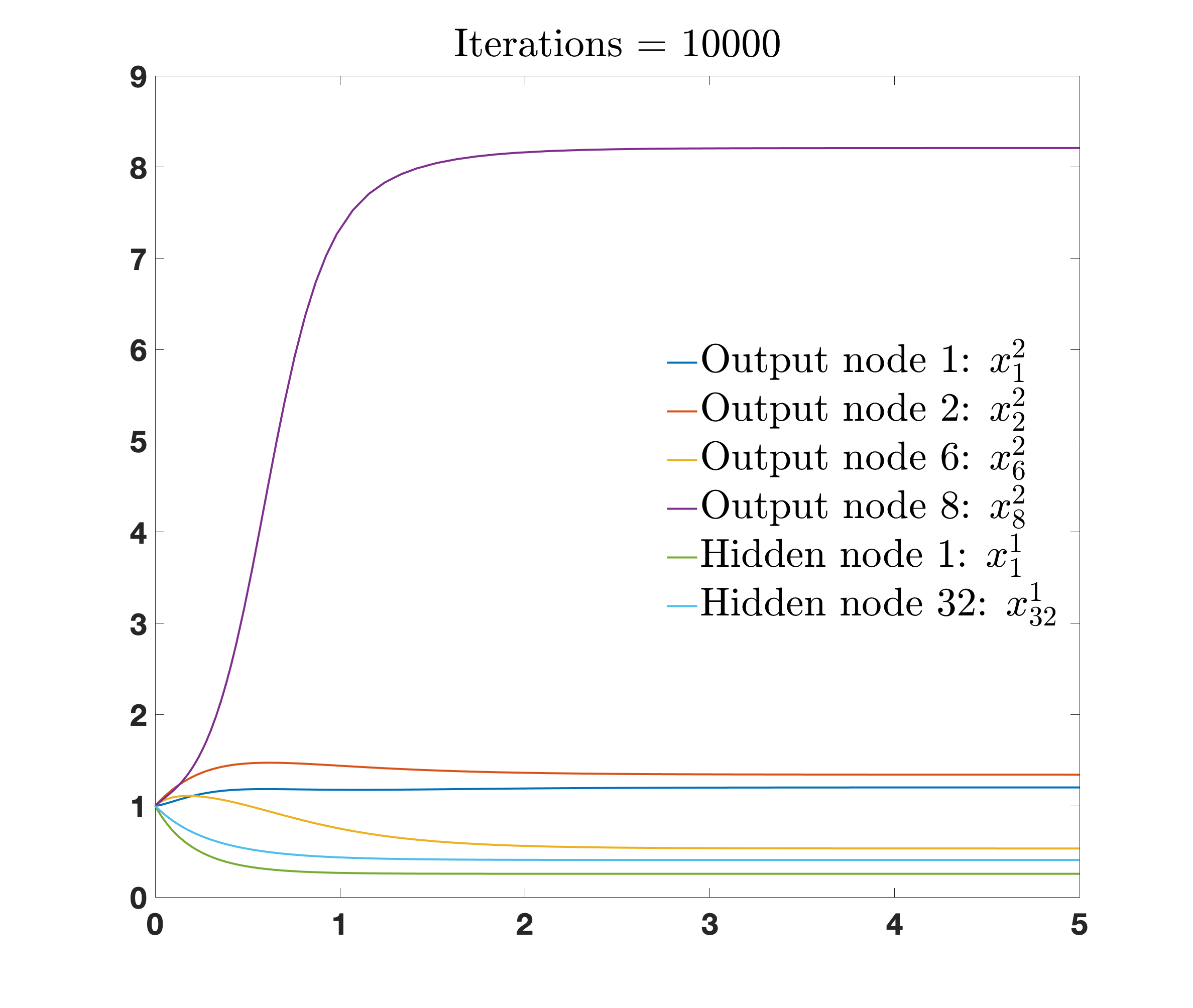}
    \caption{Plots of trajectories from the ODEs associated with our reaction network construction after different numbers of iterations.  Note how the equilibrium of the 8th output node, which is associated with the correct digit of 7, seems to be converging towards 10 as the number of iterations increases, whereas the equilibria associated with the other output nodes converge towards 1.}
    \label{fig:changes}
\end{figure}

As mentioned above, the fact that a neural network utilizing a ReLU activation function (smoothed or not) can be trained to identify the hand-drawn digits from the MNIST dataset is not the point of this paper, and is very well known.  Instead, we now focus on the behavior of the ODE associated with the reaction network implementation.  Using  the same setup as detailed above (including the randomized initial conditions), but with both the BatchSize and the number of iterations set to 1, we may output the values of the activations $a^\ell$ for the neural network with the smoothed ReLU activation function with $h = 1$.  There are a total of fifty terms (one for each node) in these vectors, which is too many to visualize.  We therefore arbitrarily selected the first and third nodes from the output layer and the first and 32nd nodes from the hidden layer.  The resulting values are
\begin{align*}
a_1^2 &= 1.54691661955827, \quad 
a_3^2 = 0.885658219979311,\\
a_1^1 &= 1.06208572398989, \quad a_{32}^1 = 0.72187173499449.  
\end{align*}
Next, we solved the system of 50 ODEs associated with our reaction network construction, with randomized initial conditions detailed above, while using exactly the same random variables as in the standard neural network implementation. 
We solved the resulting system of 50 ODEs, and representative plots for the chosen four nodes are given in Figure \ref{fig:repplots}.  We simulated until time 5 and found
\begin{align*}
x_1^2(5) &= 1.54703516476441, \quad x_3^2(5) = 0.885627963885228, \\
x_1^1(5) &= 1.06216260026126, \quad x_{32}^1(5) = 0.721901309123957.
\end{align*}
As our theory guaranteed, the values match those of the standard neural network very well.
\begin{figure}
    \centering
    \includegraphics[width=4in]{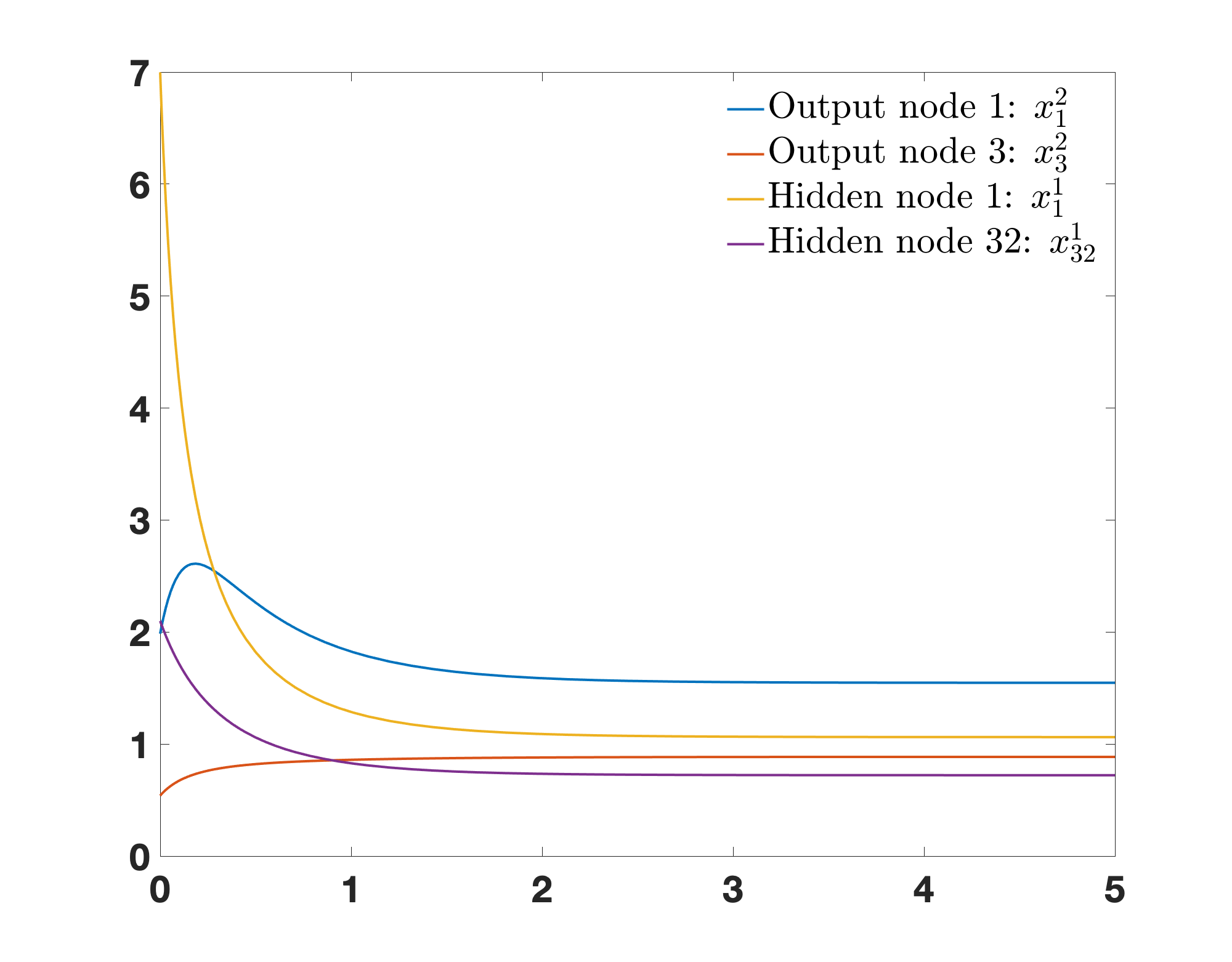}
    \caption{Representative plots with ``regularly sized'' initial conditions.  We chose to visualize nodes 1 and 3 from the output layer and nodes 1 and 32 from the hidden layer.}
    \label{fig:repplots}
\end{figure}

Of course, it is impossible to ``demonstrate'' convergence from infinity.  Instead, we simply modified the initial conditions to  
\begin{verbatim}
    x00=1000*rand([c1,1]);
\end{verbatim}
and performed the same ODE computations as detailed above.  See Figure \ref{fig:BigIC} for plots of the solutions.  The final values were
\begin{align*}
x_1^2(5) &= 1.54686939850725, \quad x_3^2(5) = 0.885624161108907, \\
x_1^1(5) &= 1.06213690351638, \quad x_{32}^1(5) = 0.721896022634959,
\end{align*}
which, again, match the values from the previous iterations.

\begin{figure}
    \centering
    \includegraphics[width=3.0in]{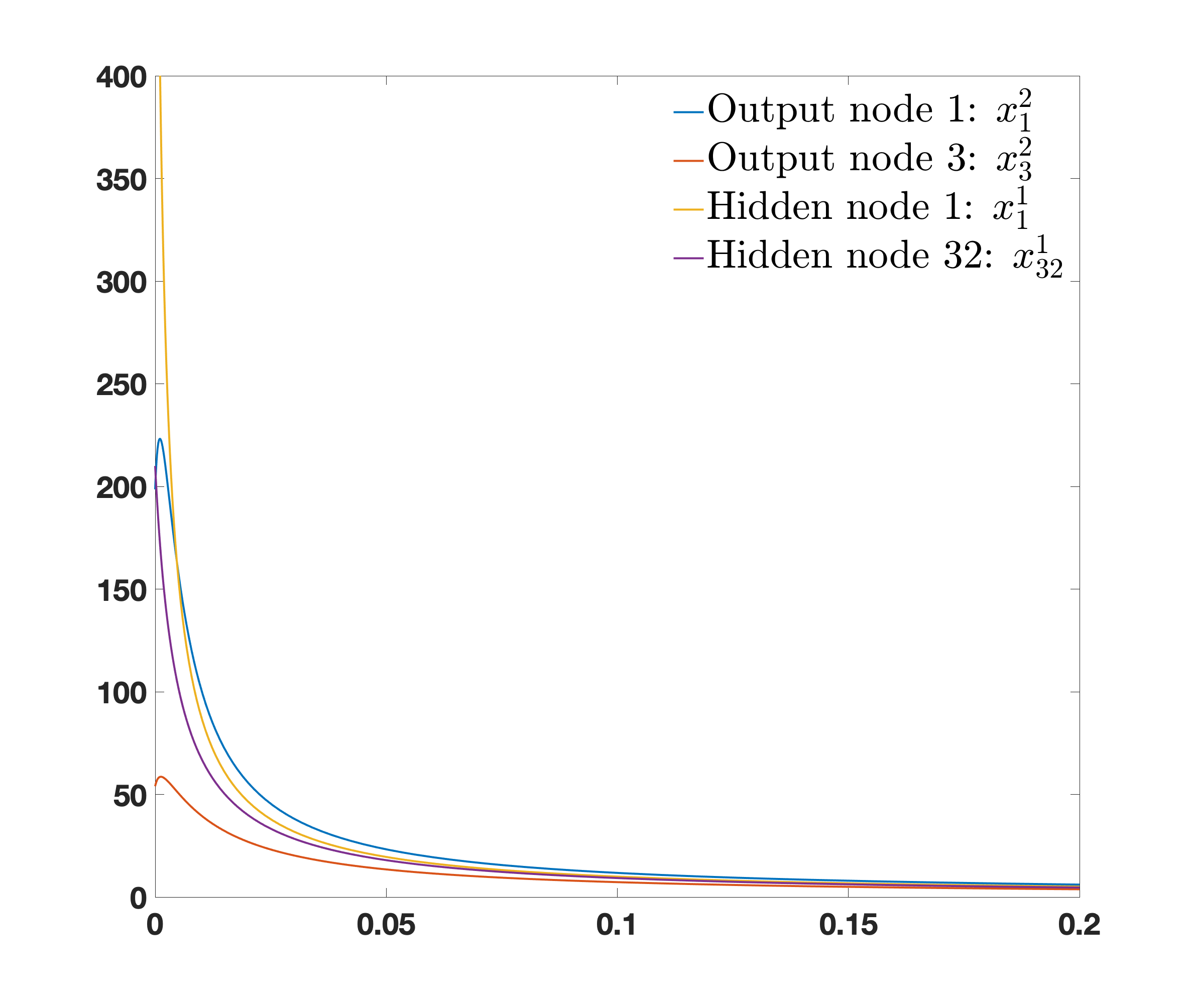}\quad \includegraphics[width=3.0in]{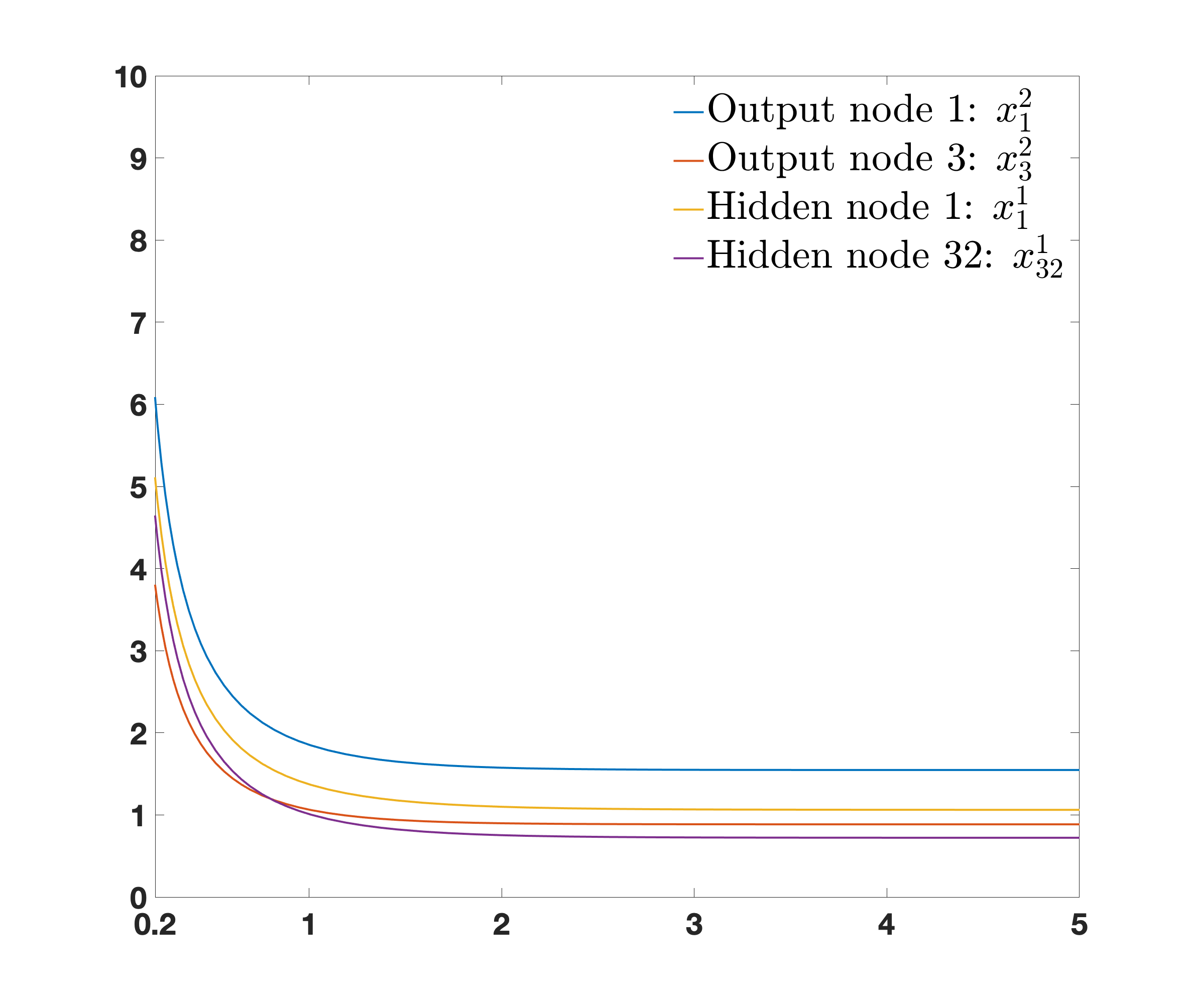}
    \caption{These are plots of the same trajectories. Note the scales on both the $x-$axes and the $y-$axes.  On the left we see the fast convergence down ``from infinity'' to values in the single digits.  On the right we see exponential convergence to the limiting  values. }
    \label{fig:BigIC}
\end{figure}

\subsection{Modifying the activation function}

We slightly modify the reaction network construction of Section \ref{sec:implement} by using $q = 3$ instead of $q = 2$ in  the final column of Table \ref{table:elementaryrxnnet}.  Thus, for each of the hidden and output nodes we simply change the reaction network so that it includes the reaction $3X \to X$ instead of $2X \to X$.  This change modifies the ODE for a particular node (hidden or output) to be
\begin{equation}\label{eq:newODE}
    \dot x = h + \rho\cdot x - 2x^3,
\end{equation}
with $h>0$ and $\rho$  as before.  Note that the above is the analog of $f_i^\ell$ in \eqref{eq:000998990}, and we are suppressing subscripts and superscripts for the sake of clarity (as we have done at times throughout the paper).   

 By Descartes' rule of signs, for each particular choice of $h$ and $\rho$ the system  \eqref{eq:newODE} has precisely one positive fixed point.    As can be shown by standard methods, this fixed point is stable.  Fixing $h>0$, the activation function for the resulting system is found by solving for the unique positive fixed point as a function of $\rho$.  See Figure \ref{fig:newactivation} for a plot of this activation function when $h =1$.  We see that the function is monotonic, grows like $\sqrt{y/2}$, as $y \to \infty$, and converges to zero, as $y \to -\infty.$ 
 Finally, it can be shown by similar arguments as in the proof of Proposition \ref{prop:our_network_NEW} that the resulting chemical system implements, in the sense of Definition \ref{def:implement}, the hardwired feed-forward neural network $(G,\PP,\varphi)$ where $\varphi$ is given in Figure \ref{fig:newactivation}, and that the system  converges from infinity in finite time (due to it having 3-polynomial decay) and is exponentially reliable.  
 
 \begin{figure}
    \centering
    \includegraphics[width=3.3in]{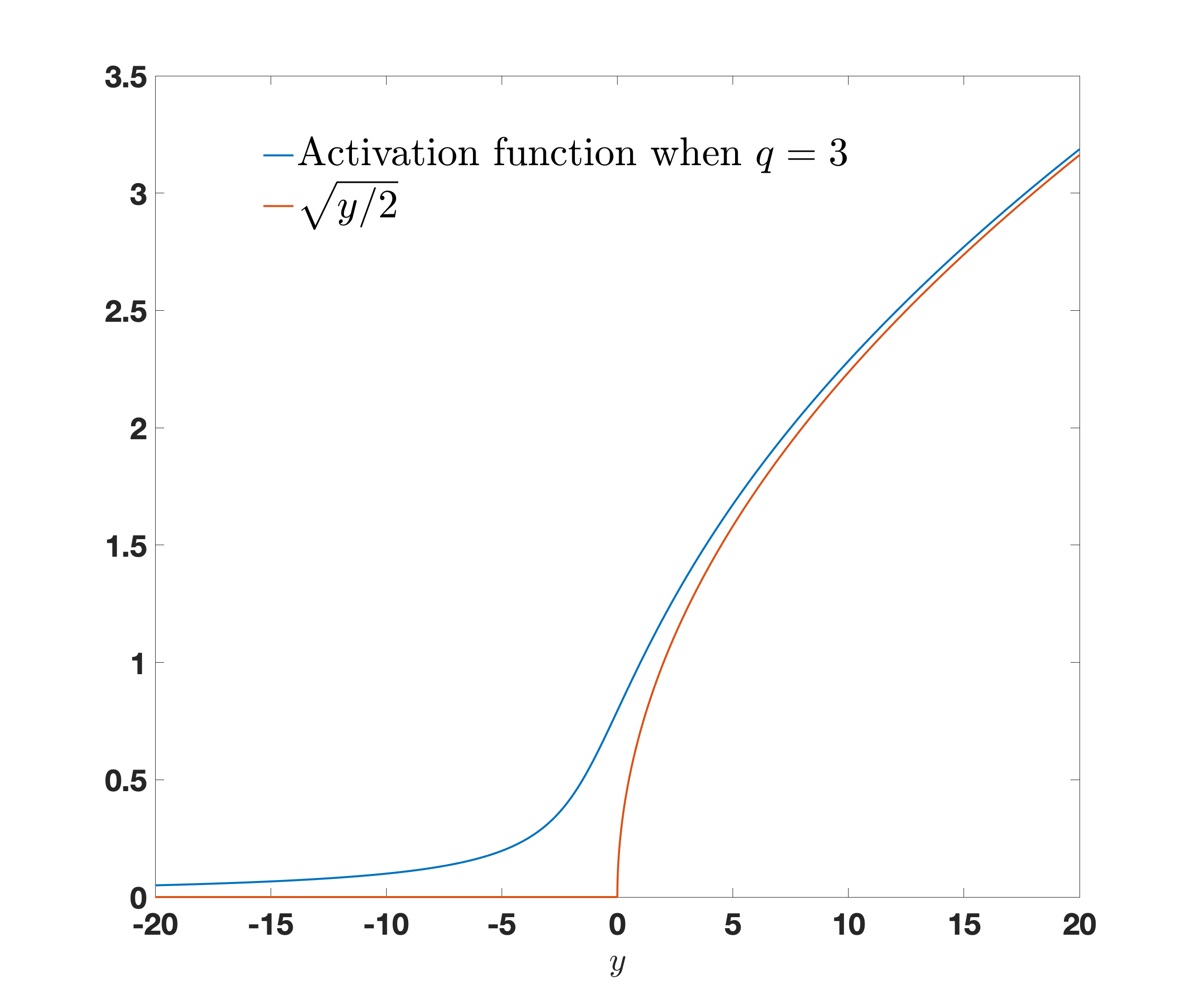}
    \caption{Activation function $\varphi$ implemented by the reaction network from Table \ref{table:elementaryrxnnet} with $q = 3$ and $h = 1$. This function is defined as the map between $y$ and the unique positive fixed point of the polynomial $1 + yx - 2x^3$.    A plot of $\sqrt{y/2}$ is added for the sake of comparison, which would be the corresponding activation function if $h$ were taken to be zero while $q = 3$. 
    }
    \label{fig:newactivation}
\end{figure}

 In order to implement this chemical system, we solved the associated ODEs, as detailed above.  However, in the case when $q = 2$ we had a nice analytic formula for the activation function, $\varphi$, given by \eqref{eq:modifiedReLU},  which we could easily differentiate  to find $\varphi'(z)$, and plug that expression into the relevant terms in \eqref{eq:allthederivatives} for the purposes of gradient descent.    In this case, we are not so fortunate.  However, this derivative can be calculated in a straightforward manner.  For a fixed value of $z$, we may denote the unique positive fixed point of \eqref{eq:newODE}, with $z =\rho$, via $\varphi(z)$, in which case we have that $\varphi(z)$ is defined implicitly via
 \[
    0=h + z\cdot \varphi(z) - 2\varphi(z)^3.
 \]
 Differentiating with respect to $z$ and solving yields
 \[
       \varphi'(z) = -\frac{\varphi(z)}{z - 3\cdot 2 \cdot \varphi(z)^2},
 \]
 As $\varphi(z)$ is the output from the ODE solver, we also get the derivative in a straightforward manner.  
 
 With all the details in place, we can run the system and implement the neural network via our new chemical reaction network.  
 In Figure \ref{fig:NewActivationPlots}, we provide plots of the estimated cost and the number of images correctly identified, out of a batch of 300, using this new chemical system and activation function when all other variables (i.e., numbers of layers, hidden nodes, seed of the random number generator, etc.) are kept the same as above.  We note that this activation function performs similarly to the ReLU activation function (see Figure \ref{fig:ReLUPlots}).

\begin{figure}
    \centering
    \includegraphics[width=3in]{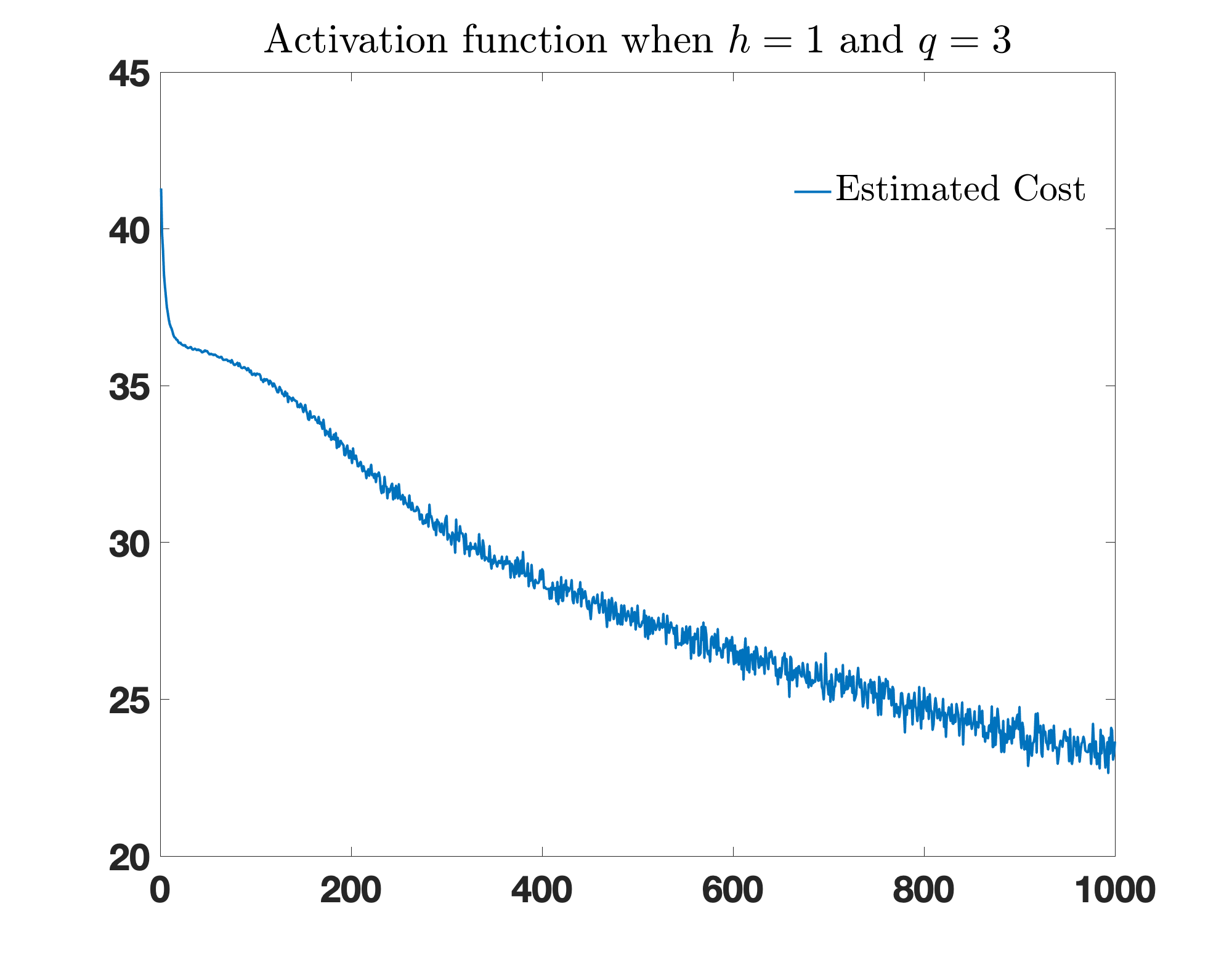}\quad \includegraphics[width=3in]{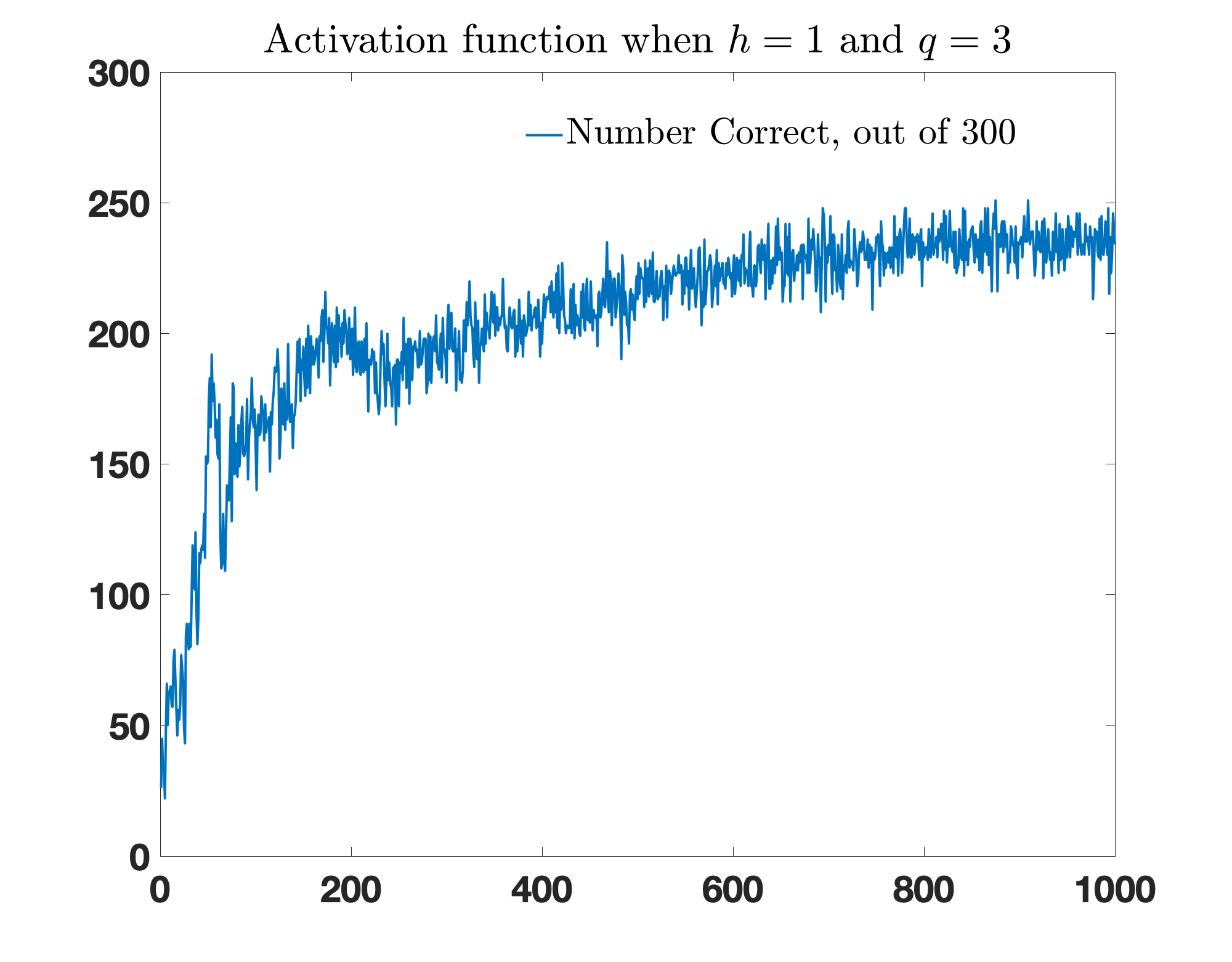}
    \caption{Performance of the  new activation  function (when $q = 3$).  Left image: estimate of the cost function over each iteration of the neural network (from 300 randomly selected elements from the MINST dataset).  Right image:  total number of images from the 300 whose digits were correctly identified.  For each image the $x$-axis represents the iteration number of the learning process. }
    \label{fig:NewActivationPlots}
\end{figure}

Finally, we note that we could also select $q$ to be any integer greater than 3, and a similar analysis can be carried out.  In particular, when $q$  is an integer  greater than or equal to $2$, we get an activation function that grows like $y^{1/(q-1)}$.  Moreover, the derivative can be calculated as above, and found to satisfy
\[
\varphi'(z) = - \frac{\varphi(z)}{z - q(q-1)\varphi(z)^{q-1}}.
\]
These systems with different activation functions could be useful in different settings.

\vspace{.15in}

\noindent {\Large \textbf{Acknowledgements}}

\vspace{.15in}

\noindent D.~Anderson gratefully acknowledges support  via the Army Research Office through grant W911NF-18-1-0324, and via the William F. Vilas Trust Estate. 
We thank Erik Winfree for helpful comments with an early draft of this paper.

\appendix
\section{Appendix}
\label{appendix}
\begin{prop}\label{prop:6587567}
Consider the ODE $\dot u = -cu^q$ where $c>0$, $u \in \R_{\ge 0}$, and $q \in \R$. If $q>1$, then $u(t) \le 1$ for any $t > ((q-1)c)^{-1}$ and any $u(0) = u_0 \in \R_{\ge 0}$.
\end{prop}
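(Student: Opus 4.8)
The plan is to solve the ODE $\dot u = -cu^q$ in closed form by separation of variables and then read the uniform bound directly off the solution. First I would dispose of the degenerate initial condition $u_0 = 0$: the constant function $u(t) \equiv 0$ solves the equation (the right-hand side vanishes at $u = 0$ since $q > 1$), so $u(t) = 0 \le 1$ for all $t$, and this case needs no further attention.

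For $u_0 > 0$, I would first note that since $q > 1$ the right-hand side $-cu^q$ is strictly negative whenever $u > 0$, so the solution is strictly decreasing and remains positive; in particular it exists for all $t \ge 0$ and we may separate variables on $(0,\infty)$. Integrating $u^{-q}\dot u = -c$ from $0$ to $t$ gives
\[
    \frac{u(t)^{1-q} - u_0^{1-q}}{1-q} = -c\,t,
\]
and multiplying through by $1-q < 0$ yields the explicit relation
\[
    u(t)^{1-q} = u_0^{1-q} + (q-1)c\,t.
\]

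From here the bound is immediate, and the only real content — which I expect is the point to emphasize rather than a genuine obstacle — is that the threshold is \emph{uniform} in $u_0$, which is precisely the ``convergence from infinity'' phenomenon. Indeed, since $u_0 > 0$ and $q > 1$ we have $u_0^{1-q} > 0$, so
\[
    u(t)^{1-q} = u_0^{1-q} + (q-1)c\,t > (q-1)c\,t,
\]
and the nonnegative term $u_0^{1-q}$ drops out of the lower bound no matter how large $u_0$ is. Consequently, for any $t > ((q-1)c)^{-1}$ we have $(q-1)c\,t > 1$, hence $u(t)^{1-q} > 1$; since $1-q < 0$ and $u(t) > 0$, this is equivalent to $u(t) < 1$, and in particular $u(t) \le 1$. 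As the threshold $((q-1)c)^{-1}$ does not depend on $u_0$, this covers every $u_0 \in \R_{\ge 0}$ and completes the argument.
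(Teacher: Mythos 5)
Your proof is correct and takes essentially the same route as the paper: both solve the ODE explicitly by separation of variables and observe that the resulting threshold time is uniform in $u_0$. The only (minor, stylistic) differences are that you drop the term $u_0^{1-q}$ to get the bound directly, where the paper tracks the crossing time $t_1$ with $u(t_1)=1$ and invokes monotonicity, and that you treat the case $u_0=0$ explicitly, which the paper glosses over.
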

\begin{proof}
For $q > 1$, the function $-cu^q$ is  locally Lipschitz for  $u \in \R_{\ge 0}$ and so the initial value problem with $u(0) = u_0 \in \R_{\ge 0}$ has a unique solution which can be found by separation of variables: 
\[
    u(t) =\frac{1}{\left((q-1) c t + u_0^{-(q-1)}\right)^{1/(q-1)}}
\]
for all $t \in \R_{\ge 0}$. Clearly, $u(t) \xrightarrow{t \to \infty} 0$ monotonically for any $u_0 \in \R_{\ge 0}$. 
It suffices to assume that $u_0 > 1$. Define $t_1$ to be the time for which $u(t_1) =1$.  Then, since $u_0>1$, we have $(q-1) c t_1 = 1 - u_0^{1-q} \in (0,1)$, implying
\[
    t_1 = \frac{1}{(q-1)c}(1 - u_0^{1-q}) \in \left( 0,  \frac{1}{(q-1)c}\right).
\]  
Noting the monotonicity of $u(t)$ now finishes the proof.
\end{proof}

We provide a version of Gr\"onwall's inequality~\cite{bellman1943stability}. 
\begin{lemma}[Gr\"onwall's inequality]\label{thm:gronwall}
 Consider the interval $I=[t_0,t]$. Let $\alpha:I\to\mathbb{R}$ and $\beta:I\to\mathbb{R}$ be continuous functions. Let $V:I\to\mathbb{R}$ be a continuously differentiable function satisfying 
\begin{eqnarray}
\frac{d}{dt} V(t) \leq \alpha(t)V(t) + \beta(t)\,\, \mbox{ for }\, t\in I.
\end{eqnarray}
Let $V(t_0)=V_0$. Then,
\begin{eqnarray}
V(t) \leq V_0\exp \left(\int_{t_0}^t \alpha(s)ds\right) + \int_{t_0}^t \exp\left({\int_s^t \alpha(r)dr}\right)\beta(s)ds\,\, \mbox{ for }\, t\in I.
\end{eqnarray}

\end{lemma}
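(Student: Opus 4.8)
The plan is to use the classical integrating-factor argument. First I would introduce the strictly positive function
\[
    \mu(t) = \exp\left(-\int_{t_0}^t \alpha(s)\,ds\right),
\]
which is well-defined and continuously differentiable on $I$ because $\alpha$ is continuous (so its antiderivative exists by the fundamental theorem of calculus). By construction $\mu(t_0) = 1$ and $\mu'(t) = -\alpha(t)\mu(t)$, and $\mu(t) > 0$ throughout $I$.

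Next I would differentiate the product $\mu(t)V(t)$. Using the product rule together with $\mu'(t) = -\alpha(t)\mu(t)$ and the hypothesis $V'(t) \le \alpha(t)V(t) + \beta(t)$, and exploiting the positivity of $\mu$, I would obtain
\[
    \frac{d}{dt}\bigl(\mu(t)V(t)\bigr) = \mu(t)\bigl(V'(t) - \alpha(t)V(t)\bigr) \le \mu(t)\beta(t).
\]

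Then I would integrate this inequality over $[t_0,t]$. Since $\mu(t_0)V(t_0) = V_0$, integrating yields
\[
    \mu(t)V(t) \le V_0 + \int_{t_0}^t \mu(s)\beta(s)\,ds.
\]
Dividing through by the positive quantity $\mu(t)$ and then rewriting the reciprocal and ratio of exponentials via $1/\mu(t) = \exp\left(\int_{t_0}^t \alpha(s)\,ds\right)$ and $\mu(s)/\mu(t) = \exp\left(\int_s^t \alpha(r)\,dr\right)$ would produce exactly the asserted bound, completing the proof.

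I do not anticipate a genuine obstacle, as this is an entirely standard result. The only steps warranting care are confirming that $\mu$ is differentiable (which follows from continuity of $\alpha$) and correctly recombining the exponentials in the final step, so that the kernel inside the integral collapses to the single factor $\exp\left(\int_s^t \alpha(r)\,dr\right)$ rather than remaining as a product of two separate exponentials.
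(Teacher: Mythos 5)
Your proposal is correct and complete. Note that the paper itself gives no proof of this lemma --- it is stated in the appendix with only a citation to Bellman --- so there is no in-paper argument to compare against; your integrating-factor proof is the standard one and would serve as a valid self-contained proof here. Each step checks out: $\mu(t)=\exp\bigl(-\int_{t_0}^t \alpha(s)\,ds\bigr)$ is positive and $C^1$ with $\mu'=-\alpha\mu$, the product-rule computation $\frac{d}{dt}\bigl(\mu V\bigr)=\mu\bigl(V'-\alpha V\bigr)\le \mu\beta$ legitimately uses positivity of $\mu$ to preserve the inequality, integration over $[t_0,t]$ with $\mu(t_0)=1$ gives $\mu(t)V(t)\le V_0+\int_{t_0}^t \mu(s)\beta(s)\,ds$, and the final division and recombination $\mu(s)/\mu(t)=\exp\bigl(\int_s^t \alpha(r)\,dr\bigr)$ yields exactly the stated bound. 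The one cosmetic point you might flag is the paper's slightly awkward convention of using $t$ both as the right endpoint of $I=[t_0,t]$ and as the running variable; your argument handles this correctly by treating the integrated inequality as holding for every point of the interval.
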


\bibliographystyle{amsplain}
\bibliography{Bibliography}

\end{document}